\def\R{\mathbb{R}}
\def\R{\mathbb{R}}
\newtheorem{theorem}{Theorem}
\newtheorem{lemma}{Lemma}
\newtheorem{corollary}{Corollary}
\begin{document}

\twocolumn[
\icmltitle{SQL-Rank: A Listwise Approach to Collaborative Ranking}



\icmlsetsymbol{equal}{*}


\begin{icmlauthorlist}
\icmlauthor{Liwei Wu}{stat,cs}
\icmlauthor{Cho-Jui Hsieh}{stat,cs}
\icmlauthor{James Sharpnack}{stat}
\end{icmlauthorlist}

\icmlaffiliation{cs}{Department of Computer Science, University of California, Davis, CA, USA}
\icmlaffiliation{stat}{Department of Statistics, University of California, Davis, CA, USA}

\icmlcorrespondingauthor{Liwei Wu}{liwu@ucdavis.com}

\icmlkeywords{Collaborative ranking, recommendation systems, listwise ranking}

\vskip 0.3in
]



\printAffiliationsAndNotice{}  

\begin{abstract}
In this paper, we propose a listwise approach for constructing user-specific rankings in recommendation systems in a collaborative fashion.
We contrast the listwise approach to previous pointwise and pairwise approaches, which are based on treating either each rating or each pairwise comparison as an independent instance respectively.
By extending the work of \cite{cao2007learning}, we cast listwise collaborative ranking as maximum likelihood under a permutation model which applies probability mass to permutations based on a low rank latent score matrix.
We present a novel algorithm called SQL-Rank, which can accommodate ties and missing data and can run in linear time. 
We develop a theoretical framework for analyzing listwise ranking methods based on a novel representation theory for the permutation model.
Applying this framework to collaborative ranking, we derive asymptotic statistical rates as the number of users and items grow together.
We conclude by demonstrating that our SQL-Rank method often outperforms current state-of-the-art algorithms for implicit feedback such as Weighted-MF and BPR and achieve favorable results when compared to explicit feedback algorithms such as matrix factorization and collaborative ranking. 
\end{abstract}

\section{Introduction}
\label{intro}

We study a novel approach to collaborative ranking---the personalized ranking of items for users based on their observed preferences---through the use of listwise losses, which are dependent only on the observed rankings of items by users.
We propose the SQL-Rank algorithm, which can handle ties and missingness, incorporate both explicit ratings and more implicit feedback, provides personalized rankings, and is based on the relative rankings of items.
To better understand the proposed contributions, let us begin with a brief history of the topic.

\subsection{A brief history of collaborative ranking}

Recommendation systems, found in many modern web applications, movie streaming services, and social media, rank new items for users and are judged based on user engagement (implicit feedback) and ratings (explicit feedback) of the recommended items.
A high-quality recommendation system must understand the popularity of an item and infer a user's specific preferences with limited data. 
Collaborative filtering, introduced in \cite{hill1995recommending}, refers to the use of an entire community's preferences to better predict the preferences of an individual (see \cite{schafer2007collaborative} for an overview).
In systems where users provide ratings of items, collaborative filtering can be approached as a point-wise prediction task, in which we attempt to predict the unobserved ratings \cite{pan2017transfer}.
Low rank methods, in which the rating distribution is parametrized by a low rank matrix (meaning that there are a few latent factors) provides a powerful framework for estimating ratings \cite{mnih2008probabilistic, koren2008factorization}.
There are several issues with this approach.
One issue is that the feedback may not be representative of the unobserved entries due to a sampling bias, an effect that is prevalent when the items are only `liked' or the feedback is implicit because it is inferred from user engagement.
Augmenting techniques like weighting were introduced to the matrix factorization objective to overcome this problem \cite{hsieh2015pu, hu2008collaborative}. Many other techniques are also introduced \cite{kabbur2013fism, wang2017irgan, wu2016collaborative}.
Another methodology worth noting is the CofiRank algorithm of \cite{weimer2008cofi} which minimizes a convex surrogate of the normalized discounted cumulative gain (NDCG).
The pointwise framework has other flaws, chief among them is that in recommendation systems we are not interested in predicting ratings or engagement, but rather we must rank the items. 

Ranking is an inherently relative exercise.
Because users have different standards for ratings, it is often desirable for ranking algorithms to rely only on relative rankings and not absolute ratings.
A ranking loss is one that only considers a user's relative preferences between items, and ignores the absolute value of the ratings entirely, thus deviating from the pointwise framework.
Ranking losses can be characterized as pairwise and listwise.
A pairwise method decomposes the objective into pairs of items $j,k$ for a user $i$, and effectively asks `did we successfully predict the comparison between $j$ and $k$ for user $i$?'.
The comparison is a binary response---user $i$ liked $j$ more than or less than $k$---with possible missing values in the event of ties or unobserved preferences.
Because the pairwise model has cast the problem in the classification framework, then tools like support vector machines were used to learn rankings; \cite{joachims2002optimizing} introduces rankSVM and efficient solvers can be found in \cite{chapelle2010efficient}.
Much of the existing literature focuses on learning a single ranking for all users, which we will call simple ranking \cite{freund2003efficient,agarwal2006ranking, pahikkala2009efficient}.
This work will focus on the personalized ranking setting, in which the ranking is dependent on the user.

Pairwise methods for personalized ranking have seen great advances in recent years, with the AltSVM algorithm of \cite{park2015preference}, Bayesian personalized ranking (BPR) of \cite{rendle2009bpr}, and the near linear-time algorithm of \cite{wu2017large}.
Nevertheless, pairwise algorithms implicitly assume that the item comparisons are independent, because the objective can be decomposed where each comparison has equal weight.
Listwise losses instead assign a loss, via a generative model, to the entire observed ranking, which can be thought of as a permutation of the $m$ items, instead of each comparison independently.
The listwise permutation model, introduced in \cite{cao2007learning}, can be thought of as a weighted urn model, where items correspond to balls in an urn and they are sequentially plucked from the urn with probability proportional to $\phi(X_{ij})$ where $X_{ij}$ is the latent score for user $i$ and item $j$ and $\phi$ is some non-negative function.
They proposed to learn rankings by optimizing a cross entropy between the probability of $k$ items being at the top of the ranking and the observed ranking, which they combine with a neural network, resulting in the ListNet algorithm. 
\cite{shi2010list} applies this idea to collaborative ranking, but uses only the top-1 probability because of the computational complexity of using top-k in this setting.  
This was extended in \cite{huang2015listwise} to incorporate neighborhood information.
\cite{xia2008listwise} instead proposes a maximum likelihood framework that uses the permutation probability directly, which enjoyed some empirical success.

Very little is understood about the theoretical performance of listwise methods.  
\cite{cao2007learning} demonstrates that the listwise loss has some basic desirable properties such as monotonicity, i.e.~increasing the score of an item will tend to make it more highly ranked.
\cite{lan2009generalization} studies the generalizability of several listwise losses, using the local Rademacher complexity, and found that the excess risk could be bounded by a \smash{$1/\sqrt n$} term (recall, $n$ is the number of users).
Two main issues with this work are that no dependence on the number of items is given---it seems these results do not hold when $m$ is increasing---and the scores are not personalized to specific users, meaning that they assume that each user is an independent and identically distributed observation.
A simple open problem is: can we consistently learn preferences from a single user's data if we are given item features and we assume a simple parametric model?  ($n = 1, m\rightarrow \infty$.)

\subsection{Contributions of this work}

We can summarize the shortcomings of the existing work: current listwise methods for collaborative ranking rely on the top-$1$ loss, algorithms involving the full permutation probability are computationally expensive, little is known about the theoretical performance of listwise methods, and few frameworks are flexible enough to handle explicit and implicit data with ties and missingness.
This paper addresses each of these in turn by proposing and analyzing the SQL-rank algorithm.

\begin{compactitem}
    \item We propose the SQL-Rank method, which is motivated by the permutation probability, and has advantages over the previous listwise method using cross entropy loss. 
    \item We provide an $O(\text{iter} \cdot (|\Omega| r))$ linear algorithm based on stochastic gradient descent, where $\Omega$ is the set of observed ratings and $r$ is the rank.
    \item The methodology can incorporate both implicit and explicit feedback, and can gracefully handle ties and missing data.
    \item We provide a theoretical framework for analyzing listwise methods, and apply this to the simple ranking and personalized ranking settings, highlighting the dependence on the number of users and items.
\end{compactitem}

\section{Methodology}
\label{method}
\subsection{Permutation probability}
The permutation probability, \cite{cao2007learning}, is a generative model for the ranking parametrized by latent scores. 
First assume there exists a ranking function that assigns scores to all the items. Let's say we have $m$ items, then the scores assigned can be represented as a vector $s = (s_1, s_2, ..., s_m)$. Denote a particular permutation (or ordering) of the $m$ items as $\pi$, which is a random variable and takes values from the set of all possible permutations $S_m$ (the symmetric group on $m$ elements). $\pi_1$ denotes the index of highest ranked item and $\pi_m$ is the lowest ranked.
The probability of obtaining $\pi$ is defined to be
\begin{equation}
\label{eq:permprob}
    P_{s}(\pi) := \prod_{j=1}^m \frac{\phi(s_{\pi_j})}{\sum_{l=j}^{m} \phi(s_{\pi_l})},
\end{equation}
where $\phi(.)$ is an increasing and strictly positive function. 
An interpretation of this model is that each item is drawn without replacement with probability proportional to $\phi(s_i)$ for item $i$ in each step.
One can easily show that $P_{s}(\pi)$ is a valid probability distribution, i.e.~$\sum_{\pi \in S_m} P_{s}(\pi) = 1, P_{s}(\pi) > 0, \forall \pi$. Furthermore, this definition of permutation probability enjoys several favorable properties (see \cite{cao2007learning}). 
For any permutation $\pi$ if you swap two elements ranked at $i<j$ generating the permutation $\pi'$ ($\pi'_i = \pi_j$, $\pi'_j = \pi_i$, $\pi_k = \pi'_k, k\ne i,j$), if $s_{\pi_i} > s_{\pi_j}$ then $P_s(\pi) > P_s(\pi')$.
Also, if permutation $\pi$ satisfies $s_{\pi_i} > s_{\pi_{i+1}}$, $\forall i$, then we have $\pi = \arg\max_{\pi' \in S_m} P_{s}(\pi')$.
Both of these properties can be summarized: larger scores will tend to be ranked more highly than lower scores.
These properties are required for the negative log-likelihood to be considered sound for ranking \cite{xia2008listwise}.


In recommendation systems, the top ranked items can be more impactful for the performance.
In order to focus on the top $k$ ranked items, we can compute the partial-ranking marginal probability,
\begin{equation}
\label{eq:topk_prob}
    P^{(k,\bar{m})}_{s}(\pi) = \prod_{j=1}^{\min\{k, \bar{m}\}} \frac{\phi(s_{\pi_j})}{\sum_{l=j}^{\bar{m}} \phi(s_{\pi_l})}.
\end{equation}
It is a common occurrence that only a proportion of the $m$ items are ranked, and in that case we will allow $\bar{m} \le m$ to be the number of observed rankings (we assume that $\pi_1,\ldots,\pi_{\bar{m}}$ are the complete list of ranked items).
When $k=1$, the first summation vanishes and top-$1$ probability can be calculated straightforwardly, which is why $k=1$ is widely used in previous listwise approaches for collaborative ranking. 
Counter-intuitively, we demonstrate that using a larger $k$ tends to improve the ranking performance.

We see that computing the likelihood loss is linear in the number of ranked items, which is in contrast to the cross-entropy loss used in \cite{cao2007learning}, which takes exponential time in $k$.
The cross-entropy loss is also not sound, i.e.~it can rank worse scoring permutations more highly, but the 
negative log-likelihood is sound.
We will discuss how we can deal with ties in the following subsection, namely, when the ranking is derived from ratings and multiple items receive the same rating, then there is ambiguity as to the order of the tied items.
This is a common occurrence when the data is implicit, namely the output is whether the user engaged with the item or not, yet did not provide explicit feedback.
Because the output is binary, the cross-entropy loss (which is based on top-$k$ probability with $k$ very small) will perform very poorly because there will be many ties for the top ranked items.
To this end, we propose a collaborative ranking algorithm using the listwise likelihood that can accommodate ties and missingness, which we call Stochastic Queuing Listwise Ranking, or SQL-Rank.

\begin{figure}[ht]
\vspace{-0.1in}
\begin{center}
\centerline{\includegraphics[width=\columnwidth]{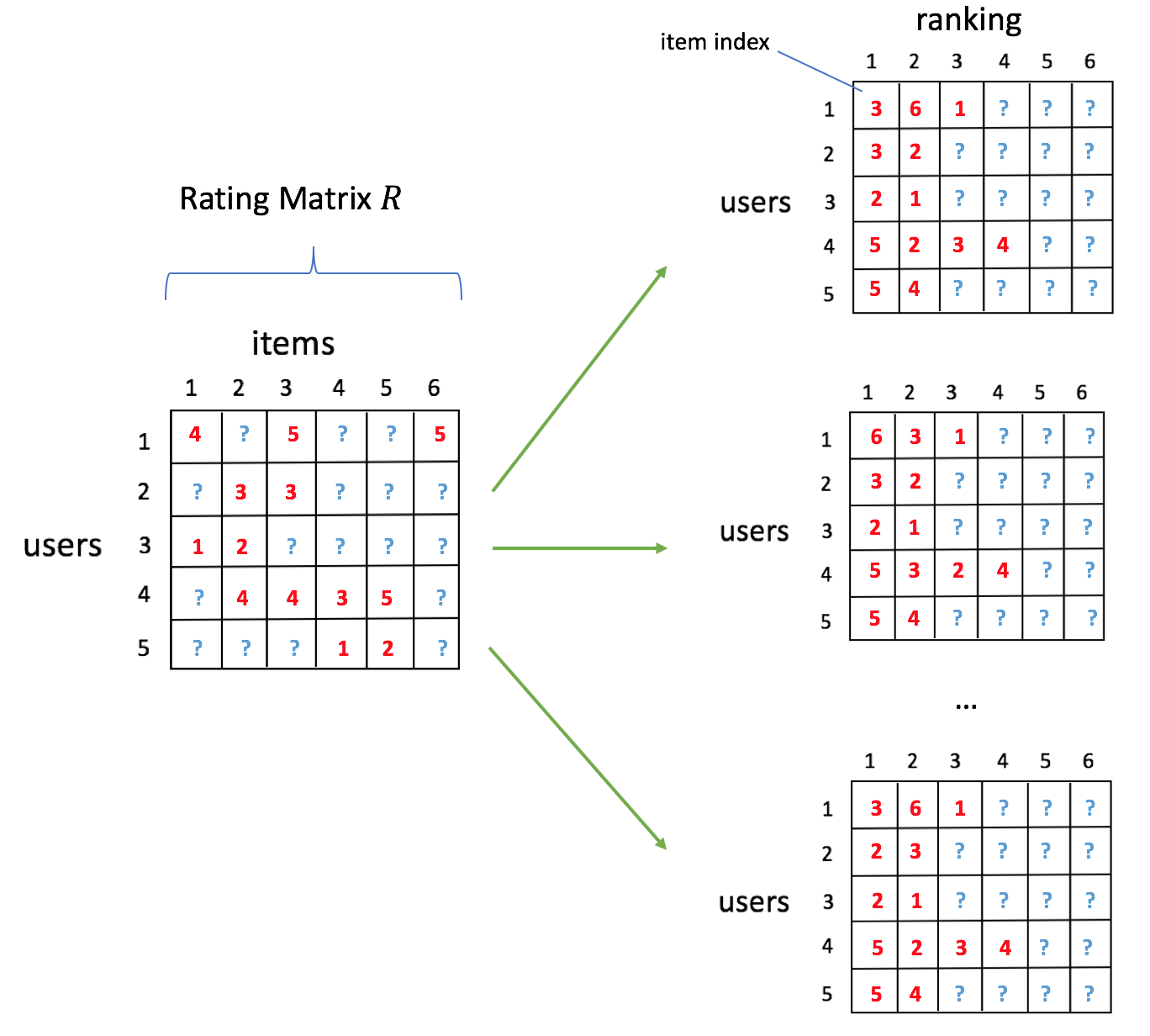}}
\caption{Demonstration of Stochastic Queuing Process---the rating matrix $R$ (left) generates multiple possible rankings $\Pi$'s (right), $\Pi \in \mathcal S(R,\Omega)$ by breaking ties randomly.}
\label{demo}
\end{center}
\vskip -0.3in
\end{figure}

\subsection{Deriving objective function for SQL-Rank}
The goal of collaborative ranking is to predict a personalized score $X_{ij}$ that reflects the preference level of user $i$ towards item $j$, where $1 \leq i \leq n$ and $1 \leq j \leq m$. It is reasonable to assume the matrix $X \in \R^{n \times m}$ to be low rank because there are only a small number of latent factors contributing to users' preferences. The input data is given in the form of ``user $i$ 
gives item $j$ a relevance score $R_{ij}$''. 
Note that for simplicity we assume all the users have the same number $\bar{m}$ of ratings, but this can be easily generalized
to the non-uniform case 
by replacing $\bar{m}$ with $m_i$ (number of ratings for user $i$).

With our scores $X$ and our ratings $R$, we can specify our collaborative ranking model using the permutation probability \eqref{eq:topk_prob}.
Let $\Pi_i$ be a ranking permutation of items for user $i$ (extracted from $R$), we can stack $\Pi_1, \dots \Pi_n$, row by row, to get the permutation matrix $\Pi \in \R^{n\times m}$. 
Assuming users are independent with each other, the probability of observing a particular $\Pi$ given the scoring matrix $X$
can be written as
\begin{equation}
    P_{X}^{(k, \bar{m})}(\Pi) = \prod_{i=1}^{n} P^{(k,\bar{m})}_{X_i}(\Pi_i).
\end{equation}
 We will assume that $\log \phi(x) = 1/(1 + \exp(-x))$ is the sigmoid function.
This has the advantage of bounding the resulting weights, \smash{$\phi(X_{ij})$}, and maintaining their positivity without adding additional constraints.

Typical rating data will contain many {\em ties} within each row.
In such cases, the permutation $\Pi$ is no longer unique and there is a set of permutations
that coincides with rating because with any candidate $\Pi$ we can arbitrarily shuffle the ordering
of items with the same relevance scores to generate a new candidate matrix $\Pi'$ which is still valid (see Figure~\ref{demo}). We denote the set of valid permutations as  $\mathcal S (R, \Omega)$, where $\Omega$ is the set of all pairs $(i,j)$ such that $R_{i,j}$ is observed.
We call this shuffling process the {\it Stochastic Queuing Process}, since one can imagine that by permuting ties we are stochastically queuing new $\Pi$'s for future use in the algorithm.


The probability of observing $R$ therefore should be defined as
    $P_{X}^{(k, \bar{m})}(R) =  \sum_{\Pi \in \mathcal S (R, \Omega)}P_X(\Pi)$. 
To learn the scoring matrix $X$, we can naturally solve the following maximum likelihood estimator with low-rank constraint: 
\begin{equation}
   \min_{X \in \mathcal X} - \log  \sum_{\Pi \in \mathcal S (R, \Omega)} P_X^{(k, \bar{m})}(\Pi), 
    \label{eq:aaa}
\end{equation}
 where $\mathcal{X}$ is the structural constraint of the scoring matrix. 
 To enforce low-rankness, we use the nuclear norm regularization $\mathcal{X}=\{X: \|X\|_*\leq r\}$. 

Eq~\eqref{eq:aaa} is hard to optimize since there is a summation inside the $\log$. 
But by Jensen's inequality and convexity of $-\log$ function, we can move the summation outside $\log$ and obtain an upper bound of the original negative log-likelihood, leading to the following optimization problem: 
\begin{equation}
\min_{X \in \mathcal X} -\sum_{\Pi \in \mathcal S (R, \Omega)} \log P_X^{(k, \bar{m})}(\Pi)
    \label{eq:convex}
\end{equation}
This upper bound is much easier to optimize and can be solved using Stochastic Gradient Descent (SGD). 

Next we discuss how to apply our model for explicit and implicit feedback settings. In the explicit feedback setting, it is assumed that the matrix $R$ is partially observed
and the observed entries are explicit ratings in a range (e.g., $1$ to $5$). 
We will show in the experiments that $k=\bar{m}$ (using the full list) leads to the best results. 
\cite{huang2015listwise} also observed that increasing $k$ is useful for their cross-entropy loss, but they were not able to increase $k$ since their model
has time complexity exponential to $k$. 

In the implicit feedback setting each element of $R_{ij}$ is either $1$ or $0$, where $1$ means positive actions (e.g., click or like) and $0$ means no action is observed.
Directly solving~\eqref{eq:convex} will be expensive since $\bar{m}=m$ and the computation
will involve all the $mn$ elements at each iteration. Moreover, the $0$'s in the matrix 
could mean either a lower relevance score or missing, thus should contribute less to the objective function. 
Therefore, we adopt the idea of negative sampling \cite{mikolov2013distributed} in our list-wise formulation. 
For each user (row of $R$), assume there are $\tilde{m}$ $1$'s, we then 
sample $\rho \tilde{m}$ unobserved entries uniformly from the same row and append to the back of the list. This then becomes the problem with $\bar{m}=(1+\rho) \tilde{m}$ and then we use the same algorithm
in explicit feedback setting to conduct updates. 
We then repeat the sampling process at the end of each iteration, so the update will be based on different set of $0$'s at each time. 

\begin{algorithm}[tb]
   \caption{SQL-Rank: General Framework}
   \label{alg:sqlrank}
\begin{algorithmic}
   \STATE {\bfseries Input:} $\Omega$, $\{R_{ij}: (i, j) \in \Omega \}$, $\lambda\in \R^+$, $ss$, $rate$, $\rho$
   \STATE {\bfseries Output:} $U\in \R^{r \times n}$ and $V \in \R^{r \times m}$
   \STATE Randomly initialize $U, V$ from Gaussian Distribution
   \REPEAT
   \STATE Generate a new permutation matrix $\Pi$ \COMMENT{see alg~\ref{alg:sq}} 
   \STATE Apply gradient update to U while fixing V 
   \STATE Apply gradient update to V while fixing U \COMMENT{see alg~\ref{alg:update}}
   
   \UNTIL{performance for validation set is good}
   \STATE \textbf{return} $U, V$\COMMENT{recover score matrix $X$}
\end{algorithmic}
\end{algorithm}

\begin{algorithm}[tb]
   \caption{Stochastic Queuing Process}
   \label{alg:sq}
\begin{algorithmic}
   \STATE {\bfseries Input:} $\Omega$, $\{R_{ij}: (i, j) \in \Omega \}$, $\rho$  
   \STATE {\bfseries Output:} $\Pi \in \R^{n \times m}$
   \FOR{$i = 1$ {\bfseries to} $n$}
    \STATE Sort items based on observed relevance levels $R_i$
    \STATE Form $\Pi_i$ based on indices of items in the sorted list
    \STATE Shuffle $\Pi_i$ for items within the same relevance level
    \IF{Dataset is implicit feedback}
        \STATE Uniformly sample $\rho \tilde{m}$ items from unobserved items 
        \STATE Append sampled indices to the back of $\Pi_i$
    \ENDIF
   \ENDFOR 
   \STATE Stack $\Pi_i$ as rows to form matrix $\Pi$
   \STATE {\bfseries Return} $\Pi$ \COMMENT{Used later to compute gradient}
\end{algorithmic}
\end{algorithm}

\subsection{Non-convex implementation}
Despite the advantage of the objective function in equation~\eqref{eq:convex} being convex, it is still not feasible for large-scale problems since the scoring matrix $X\in \R^{n\times m}$ leads to high computational and memory cost. We follow a common trick to transform~\eqref{eq:convex} to the non-convex form by replacing $X=U^TV$:
with  $U \in \R^{r \times n}, V\in \R^{r \times m}$ so that the objective is,
\begin{align*}
    \! \sum_{\Pi \in \mathcal S (R, \Omega)}\!\! \underbrace{-\sum_{i=1}^{n} \sum_{j=1}^{\bar{m}}   \log  \frac{\phi(u_i^T v_{\Pi_{ij}})}{\sum_{l=j}^{\bar{m}} \phi(u_i^T v_{\Pi_{il}})}}_{f(U, V)} 
   \! +\! \frac{\lambda}{2} (\|U\|_F^2 \!+\! \|V\|_F^2), 
   \label{eq:nonconvex}
\end{align*}
where $u_i, v_j$ are columns of $U, V$ respectively. We apply stochastic gradient descent to solve this problem. At each step, we choose a permutation matrix $\Pi\in\mathcal S (R, \Omega) $ 
using the stochastic queuing process (Algorithm \ref{alg:sq}) and then update $U, V$ by $\nabla f(U, V)$. For example, the gradient with respect to $V$ is ($g = \log \phi$ is the sigmoid function),
\begin{align*}
\frac{\partial f}{\partial v_j} &= \sum_{i\in \Omega_j} \sum_{t=1}^{\bar{m}}\bigg\{ -g'(u_i^T v_t) u_i \\
& + \frac{\mathbbm{1}(\text{rank}_i(j)\geq t) \phi(u_i^T v_j)}{\sum_{l=t}^{\bar{m}}\phi(u_i^T v_{\Pi_{il}})} g'(u_i^T v_j)u_i\bigg\}
\end{align*}
where $\Omega_j$ denotes the set of users that have rated the item $j$ and $\text{rank}_i(j)$ is a function gives the rank of the item $j$ for that user $i$. Because $g$ is the sigmoid function, $g' = g \cdot (1 - g)$. 
The gradient with respect to $U$ can be derived similarly. 

As one can see, a naive way to compute the gradient of $f$ requires  $O(n \bar{m}^2 r)$ time, which is very slow even for one iteration. However, 
we show in Algorithm~\ref{alg:gradv} (in the appendix) that there is a smart way to re-arranging the computation so that $\nabla_V f(U, V)$ can be computed in $O(n \bar{m} r)$
time, which makes our SQL-Rank a linear-time algorithm (with the same per-iteration complexity as classical matrix factorization). 


\section{Theory}
\label{theory}

Throughout this section, we will establish a theoretical framework for understanding listwise ranking algorithms.
We do not consider ties and missing data and reserve this extension of the theory developed here for future work.
These tools can be employed to analyze any problem of the constrained form 
\begin{equation}
\label{eq:constrainedform}
\hat X := \arg\min - \log P_X(\Pi) \textrm{ such that } X \in \mathcal X.
\end{equation}
We will consider two main settings of listwise ranking, the simple ranking setting where for each $X \in \mathcal X$, 
\begin{equation}
\label{eq:simplepara}
X_{ij} = \beta^\top z_{ij}, \beta \in \mathbb R^{s}, \| \beta \| \le c_b,
\end{equation}
where the feature vectors $z_{ij} \in \mathbb R^{s}$ are known, and the personalized setting, 
\begin{equation}
\label{eq:persmodel}
    X_{ij} = u_i^\top v_j, \\ u_i, v_j \in \mathbb R^{r}, \| U \|_F \le c_u, \| V \|_F \le c_v.
\end{equation}
The simple ranking setting, among other listwise programs was considered in \cite{lan2009generalization}, and it was determined that the excess risk is bounded by a \smash{$1 / \sqrt n$} term.
Critically, these results assumed that the number of items $m$ is bounded, severely limiting their relevance to realistic recommendation systems.
It seems that we should be able to learn something about a user's preferences by having them rank more items, yet the existing theory does not reflect this.

The main engine of our theoretical analysis is a generative mechanism for listwise ranking, which demonstrates that the permutation probability model, \eqref{eq:permprob}, is also the probability of a row-wise ordering of an exponential ensemble matrix.
We demonstrate that the excess risk in the parametric setting scales like $\sqrt{m} \ln m / n$, achieving parametric rates in $n$ and sub-linear excess risk in $m$ when the feature dimension $s$ is fixed.
In the personalized setting, \eqref{eq:persmodel}, we bound the excess risk by $\sqrt{m/n} \ln m$ when the rank $r$ is fixed, which matches comparable results for matrix factorization up to log factors.

\subsection{Generative mechanism for listwise ranking}

We give an alternative generative mechanism which will prove useful for understanding the listwise ranking objective.
\begin{theorem}
\label{thm:gen}
Consider a matrix, $Y$, with independent entries, $Y_{ij}$ that are drawn from an exponential distribution with rate $\phi(X_{ij})$.
Let $\Pi_i$ be the ordering of the entries of $Y_i$ from smallest to largest, then the probability of $\Pi_i | X_i$ is exactly $P_{X_i}(\Pi_i)$.
\end{theorem}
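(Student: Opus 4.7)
The plan is to use two classical facts about independent exponential random variables: (i) the identity of the minimum and (ii) the memoryless property. Writing $\lambda_{ij} = \phi(X_{ij})$, we have that for independent exponentials $Y_{i1},\ldots,Y_{im}$ with rates $\lambda_{i1},\ldots,\lambda_{im}$, the index $K_i = \arg\min_j Y_{ij}$ satisfies $\Pr(K_i = k) = \lambda_{ik} / \sum_{l=1}^m \lambda_{il}$, and conditional on $K_i = k$ and $Y_{ik} = y$, the shifted variables $\{Y_{ij} - y : j \neq k\}$ are again independent exponentials with their original rates $\lambda_{ij}$.

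With these two facts in hand, the proof is essentially an induction on $m$. Since $\Pi_i$ is the ordering of $Y_i$ from smallest to largest, $\pi_1$ is the argmin over all $m$ items, so fact (i) gives the $j=1$ factor $\phi(X_{i,\pi_1}) / \sum_{l=1}^m \phi(X_{i,\pi_l})$. Conditioning on $\pi_1$ being the argmin (and marginalizing out the actual value $Y_{i,\pi_1}$ using fact (ii)), the remaining $m-1$ variables are independent exponentials with their original rates restricted to indices $\{\pi_2,\ldots,\pi_m\}$. The conditional event $\{\Pi_i = \pi \mid \pi_1 = \arg\min\}$ is then exactly the event that the ordering of these $m-1$ residual exponentials is $(\pi_2,\ldots,\pi_m)$, which by the inductive hypothesis has probability $\prod_{j=2}^m \phi(X_{i,\pi_j}) / \sum_{l=j}^m \phi(X_{i,\pi_l})$. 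Multiplying the two probabilities yields the full product \eqref{eq:permprob}.

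An equivalent, telescoping presentation avoids the induction: write the joint event $\{\Pi_i = \pi\}$ as the intersection of the $m$ events $A_j = \{\pi_j = \arg\min_{l \ge j} Y_{i,\pi_l}\}$, chain-rule the probability as $\prod_{j=1}^m \Pr(A_j \mid A_1,\ldots,A_{j-1})$, and apply memorylessness at each step to see that conditionally on $A_1,\ldots,A_{j-1}$ the random variables $\{Y_{i,\pi_l} : l \ge j\}$ have the same joint distribution as fresh independent exponentials with rates $\phi(X_{i,\pi_l})$. Fact (i) then immediately yields $\Pr(A_j \mid A_1,\ldots,A_{j-1}) = \phi(X_{i,\pi_j}) / \sum_{l=j}^m \phi(X_{i,\pi_l})$, and the product over $j$ is exactly $P_{X_i}(\pi)$.

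The only subtlety, and really the sole technical point to be careful about, is the correct application of the memoryless property to justify that after removing the current minimum the surviving variables behave as independent exponentials with unchanged rates; this has to be done by conditioning on both the identity and value of the minimum and then integrating out the value. Beyond that the argument is routine, so I would not expect the proof to be longer than a short paragraph.
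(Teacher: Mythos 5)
Your proposal is correct and follows essentially the same route as the paper's proof: both first compute $\Pr(K_i=k)=\phi(X_{ik})/\sum_l \phi(X_{il})$ for the argmin of independent exponentials, then invoke memorylessness to show the surviving shifted variables are again independent exponentials with unchanged rates, and finish by induction (the paper phrases the recursion via the tower property, which is the same telescoping you describe). No gaps; your noted subtlety about conditioning on both the identity and the value of the minimum and integrating it out is exactly the point the paper's proof also relies on.
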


The proof is in the appendix. A key aspect of this generative mechanism is that the listwise likelihood can be written as a function of the exponential ensemble.
This allows us to establish concentration of measure results for the listwise loss via bounded differences.

\subsection{Statistical guarantees}

As a first step to controlling the excess risk, we establish a basic inequality.
This bounds the excess risk by an empirical process term, which is a random function of $\hat X$ and for a fixed $\hat X$ it has mean zero.
The excess risk (the difference in expected loss between the estimate and the truth) can also be written as the KL divergence between the estimated model and the true model.

\begin{lemma}
\label{lem:basic}
Consider the minimizer, $\hat X$, to the constrained optimization, \eqref{eq:constrainedform}.  Suppose that there exists a $X^\star \in \mathcal X$ such that $\Pi_i \sim P_{X^\star_i}$ independently for all \smash{$i=1,\ldots,n$}.  
The KL-divergence between the estimate and the truth is bounded
\begin{align*}
&D(X^\star,\hat X) := \frac 1n \sum_{i=1}^n \mathbb E \log \frac{P_{X_i^\star}(\Pi_i)}{P_{\hat X_i}(\Pi_i)} \tag{basic} \label{eq:basic} \\
&\le - \frac 1n \sum_{i=1}^n \left( \log \frac{P_{X_i^\star}(\Pi_i)}{P_{\hat X_i}(\Pi_i)} - \mathbb E \log \frac{P_{X_i^\star}(\Pi_i)}{P_{\hat X_i}(\Pi_i)} \right).
\end{align*}
\end{lemma}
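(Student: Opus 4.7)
The plan is to invoke the standard ``basic inequality'' from empirical risk minimization, using the optimality of $\hat X$ and a simple add-subtract step. First I would observe that the optimization problem~\eqref{eq:constrainedform}, when applied to independent users, amounts to minimizing $-\sum_{i=1}^n \log P_{X_i}(\Pi_i)$ over $X \in \mathcal X$. Since $X^\star \in \mathcal X$ is feasible, the optimality of $\hat X$ immediately yields
\begin{equation*}
\frac{1}{n} \sum_{i=1}^n \log \frac{P_{X^\star_i}(\Pi_i)}{P_{\hat X_i}(\Pi_i)} \le 0.
\end{equation*}

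Next I would unpack the definition of $D(X^\star, \hat X)$. For each user $i$, the inner expectation $\mathbb E \log [P_{X^\star_i}(\Pi_i)/P_{X_i}(\Pi_i)]$ is the KL divergence $\mathrm{KL}(P_{X^\star_i} \| P_{X_i})$ viewed as a functional of $X$, with $X$ held fixed and $\Pi_i \sim P_{X^\star_i}$. Writing $\Delta_i(X) := \log[P_{X^\star_i}(\Pi_i)/P_{X_i}(\Pi_i)]$, I would then perform the trivial decomposition
\begin{equation*}
D(X^\star, \hat X) = \frac{1}{n} \sum_{i=1}^n \mathbb E \Delta_i(\hat X) = \frac{1}{n} \sum_{i=1}^n \Delta_i(\hat X) + \frac{1}{n} \sum_{i=1}^n \bigl( \mathbb E \Delta_i(\hat X) - \Delta_i(\hat X) \bigr),
\end{equation*}
and drop the first sum using the optimality inequality above. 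What remains is exactly the centered empirical process on the right-hand side of the claim.

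I do not anticipate any real obstacle: the lemma is deliberately a soft statement whose proof consists only of optimality plus add-and-subtract. The one subtlety worth flagging in the write-up is that $\hat X$ is random and depends on the very same $\Pi_i$'s that appear inside $\Delta_i$ on the right, so the ``$\mathbb E \Delta_i(\hat X)$'' must be interpreted as $\mathrm{KL}(P_{X^\star_i} \| P_{\hat X_i})$ with $\hat X$ plugged in as a realized value, not as integrating out the randomness in $\hat X$. This is precisely why the next step of the program will not just be to take another expectation (which would kill the right-hand side), but rather to control it uniformly over $\mathcal X$ via a concentration argument, likely using the exponential-ensemble representation of Theorem~\ref{thm:gen} together with bounded-differences on the listwise log-likelihood.
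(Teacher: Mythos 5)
Your proposal is correct and follows essentially the same argument as the paper: optimality of $\hat X$ against the feasible point $X^\star$ gives $\frac{1}{n}\sum_i \log\bigl[P_{X^\star_i}(\Pi_i)/P_{\hat X_i}(\Pi_i)\bigr] \le 0$, and adding and subtracting the expectation yields the bound. Your remark that the expectation is taken conditionally on the realized $\hat X$ matches the paper's own closing caveat, so there is nothing to add.
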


Because the RHS of \eqref{eq:basic}, the empirical process term, has mean zero and is a function of the random permutation, we can use Theorem \ref{thm:gen} to bound it with high probability for a fixed $\hat X$.
Because $\hat X$ is random, we need to control the empirical process term uniformly over the selection of $\hat X \in \mathcal X$.
To this end, we employ Dudley's chaining, which gives us the following theorem (see the Supplement for the complete proof).

\begin{theorem}
\label{thm:chaining}
Assume the conditions of Lemma \ref{lem:basic}.
Define the matrix norm, for the $n \times m$ matrix $Z$,
$$
\| Z \|_{\infty, 2} := \sqrt{ \sum_{i=1}^n \| Z_i \|_\infty^2} 
$$
and define $\mathcal Z = \{ \log \phi(X) : X \in \mathcal X\}$ where $\log \phi$ is applied elementwise.
Also, let $\mathcal N(\epsilon, \mathcal Z, \| . \|_{\infty,2})$ be the $\epsilon$-covering number of $\mathcal Z$ in the $\infty,2$ norm (the fewest number of $\epsilon$ radius balls in the $\infty,2$ norm required to cover $\mathcal Z$). Then, if $\sup_{Z \in \mathcal Z} \| Z \|_\infty \le C$ (where $\| . \|_\infty$ is the elementwise absolute maximum), then
\begin{equation*}
D(X^\star,\hat X) = O_{\mathbb P} \left( \frac{\sqrt m \ln(m)}{n} \cdot g(\mathcal Z) \right),
\end{equation*}
where 
\begin{equation*}
g(\mathcal Z) := \int_0^\infty \sqrt{\ln \mathcal N (u,\mathcal Z, \|. \|_{\infty,2})} {\rm d}u,
\end{equation*}
and $C$ is bounded by a constant in $n,m$.
\end{theorem}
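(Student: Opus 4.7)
The plan is to follow the classical empirical process recipe: start from Lemma~\ref{lem:basic} to reduce the excess KL to a centered empirical process, then bound that process uniformly over $X\in\mathcal X$ by Dudley's chaining, using Theorem~\ref{thm:gen} as the source of independence. Concretely, writing $Z = \log\phi(X)$ applied elementwise and $L_i(Z) := \log P_{X_i}(\Pi_i)$, Lemma~\ref{lem:basic} gives
\begin{equation*}
D(X^\star,\hat X) \;\le\; \sup_{X\in\mathcal X}\bigl|E_n(X)\bigr|, \qquad E_n(X) := \tfrac{1}{n}\sum_{i=1}^n \bigl(L_i(Z) - L_i(Z^\star) - \mathbb E[L_i(Z) - L_i(Z^\star)]\bigr),
\end{equation*}
so the entire task is to control the centered supremum on the right.

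The second step is to invoke the generative mechanism of Theorem~\ref{thm:gen}: each row $\Pi_i$ is a deterministic function of an independent exponential vector $Y_i=(Y_{ij})_{j\le m}$ with rates $\phi(X^\star_{ij})$, and the $Y_i$ are independent across $i$. This makes $E_n(X)-E_n(X')$ a function of $nm$ independent coordinates, setting us up for a bounded differences argument. Expanding the log-likelihood as
\begin{equation*}
L_i(Z) \;=\; \sum_{j=1}^m Z_{i,\Pi_i(j)} \;-\; \sum_{j=1}^m \log\sum_{l=j}^m e^{Z_{i,\Pi_i(l)}},
\end{equation*}
the first sum is invariant under the permutation, so after subtracting the true $L_i(Z^\star)$ and recentering, only the log-sum-exp terms drive the fluctuation. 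Since $\log\sum e^{(\cdot)}$ is $1$-Lipschitz in $\|\cdot\|_\infty$, for any fixed permutation one gets $|L_i(Z)-L_i(Z')|\lesssim m\,\|Z_i-Z'_i\|_\infty$, and perturbing a single $Y_{ij}$ can only reshuffle $\Pi_i$, shifting this difference by at most $O(m\,\|Z_i-Z'_i\|_\infty)$. Applying McDiarmid across the independent $Y_{ij}$ then yields a sub-Gaussian tail for $E_n(X)-E_n(X')$ whose variance proxy is controlled by $\|Z-Z'\|_{\infty,2}^2 / n$ up to an explicit power of $m$.

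The third step is Dudley's chaining in the $\|\cdot\|_{\infty,2}$ metric on $\mathcal Z$. Since $\sup_{Z\in\mathcal Z}\|Z\|_\infty \le C$ is bounded in $n,m$, the diameter of $\mathcal Z$ is $O(\sqrt n\,C)$, and the sub-Gaussian increment bound from step two gives
\begin{equation*}
\mathbb E\sup_{X\in\mathcal X}|E_n(X)| \;\lesssim\; \frac{m^{\alpha/2}}{n}\int_0^\infty \sqrt{\ln\mathcal N(u,\mathcal Z,\|\cdot\|_{\infty,2})}\,du \;=\; \frac{m^{\alpha/2}}{n}\,g(\mathcal Z),
\end{equation*}
with a further $\ln m$ arising when one converts the expectation into an in-probability statement via a union bound / peeling argument over shells of $\mathcal Z$ whose log-cardinality is $O(\ln m)$ (this is also the natural place to truncate the exponential $Y_{ij}$'s at a logarithmic level, which turns their unbounded tails into a bounded-differences setting). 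Combining with the basic inequality gives the stated conclusion $D(X^\star,\hat X)=O_{\mathbb P}(\sqrt m\,\ln(m)\,g(\mathcal Z)/n)$.

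The main obstacle is getting the power of $m$ right in step two. A crude McDiarmid based on the worst-case Lipschitz bound above loses an extra factor of $m$ compared to the target $\sqrt m$; to recover it one has to exploit that modifying a single $Y_{ij}$ usually only triggers an adjacent swap in $\Pi_i$, so only a handful of the $m$ log-sum-exp terms $S_{ij}$ actually react. Quantifying this via an Efron--Stein / entropy-method bound on the row-wise variance of $L_i(Z)-L_i(Z')$, and separately handling the low-probability event that $Y_{ij}$ sweeps across many ranks of the ordering (bounded using the exponential tails and the truncation mentioned above), is the delicate part of the argument and is what ultimately produces the $\sqrt m \ln m$ rather than $m^{3/2}$ prefactor.
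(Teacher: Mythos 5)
Your high-level architecture matches the paper's: basic inequality from Lemma \ref{lem:basic}, the exponential representation from Theorem \ref{thm:gen} to get $nm$ independent coordinates, McDiarmid for sub-Gaussian increments, then Dudley's chaining in $\|\cdot\|_{\infty,2}$. But the proof's entire quantitative content lives in the step you defer as ``the delicate part,'' and the route you sketch for it is not the one that works. The paper's key lemma (Lemma \ref{lem:bdd_diffs}) shows that if perturbing one $Y_{ij}$ translates an item from position $\ell$ to position $\ell'$ in $\pi$ --- no matter how far apart $\ell$ and $\ell'$ are --- the \emph{relative} loss $L_{x,x'}(\pi)=\log\bigl(P_x(\pi)/P_{x'}(\pi)\bigr)$ changes by at most $\bigl(2+e^{2C}\ln(m+1)\bigr)\,\|\log\phi(x)-\log\phi(x')\|_\infty$. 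This is a deterministic, worst-case bound: the perturbed denominators $\Lambda_j$ telescope, leaving a sum $\sum_{j=\ell}^{\ell'-1}\log(1+\lambda_\ell/\Lambda_j)$ whose terms are controlled by $e^{2C}/(m-j+1)$, i.e.\ a harmonic series. The crucial structural point you miss is that one must bound the increment of the \emph{difference} of log-likelihoods between $x$ and $x'$, not each log-likelihood separately; it is this cancellation that turns a worst-case $O(m)$ Lipschitz constant into $O(\ln m)\cdot\delta$. Consequently your proposed machinery --- Efron--Stein on the ``typical adjacent swap,'' separate treatment of the rare event that $Y_{ij}$ sweeps many ranks, and logarithmic truncation of the exponentials --- is unnecessary, and as sketched it does not obviously close the gap, since McDiarmid requires a worst-case per-coordinate bound and cannot be fed an ``on average only adjacent swaps occur'' statement without substantial extra work.

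Two further corrections. First, you misattribute the $\ln m$ factor: it does not come from a peeling or union-bound step when passing from expectation to probability, but directly from the harmonic number $H_m\le\ln(m+1)$ inside the bounded-differences constant $C_0$. Second, the $\sqrt m$ arises because the sum of squared coordinate-wise bounds over the $m$ entries of row $i$ is $m\,C_0^2\,\|\log\phi(X_i)-\log\phi(X_i')\|_\infty^2$, which aggregates to $m\,C_0^2\,\|Z-Z'\|_{\infty,2}^2$ and hence a canonical distance $\sqrt m\,C_0\,\|Z-Z'\|_{\infty,2}$ --- not from any balance between typical and atypical permutation changes. With Lemma \ref{lem:bdd_diffs} in hand the rest of your outline goes through essentially as you describe; without it, the proof is incomplete at its central step.
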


Theorem \ref{thm:chaining} bounds the KL-divergence by the geometric quantity $g(\mathcal Z)$.
For the derived corollaries, we will assume that $\log \phi$ is 1-Lipschitz, which is true when $\log \phi$ is the sigmoid function.
The results do not significantly change by increasing the Lipschitz constant, so for simplicity of presentation we set it to be 1.

\begin{corollary}
\label{cor:simple}
Assume the conditions to Lemma \ref{lem:basic}, the simple ranking setting \eqref{eq:simplepara}, that $\log \phi$ is 1-Lipschitz, and $\|Z_{ij}\|_2$ is bounded uniformly, then
$$
D(X^\star,\hat X) = O_{\mathbb P} \left( \frac{\sqrt{sm}}{n} \ln m\right).
$$
\end{corollary}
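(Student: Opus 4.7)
The plan is to apply Theorem \ref{thm:chaining} directly: the only work is to bound $g(\mathcal{Z})$ for the parametric class $\mathcal{X} = \{X_\beta : X_{\beta,ij} = \beta^\top z_{ij},\ \|\beta\|_2 \le c_b\}$. Since $\log \phi$ is 1-Lipschitz and $\|\cdot\|_{\infty,2}$ is monotone in the absolute values of entries, applying $\log\phi$ elementwise is a contraction in $\|\cdot\|_{\infty,2}$, so $\mathcal{N}(\epsilon, \mathcal{Z}, \|\cdot\|_{\infty,2}) \le \mathcal{N}(\epsilon, \mathcal{X}, \|\cdot\|_{\infty,2})$. It therefore suffices to cover $\mathcal{X}$ itself.

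To that end, I would exploit the parameterization by $\beta$. Letting $B := \sup_{i,j}\|z_{ij}\|_2 < \infty$ and applying Cauchy--Schwarz inside the row-wise max, then summing over users, gives
\begin{equation*}
\|X_\beta - X_{\beta'}\|_{\infty,2}^2 = \sum_{i=1}^n \max_j \bigl|(\beta - \beta')^\top z_{ij}\bigr|^2 \le n B^2 \|\beta - \beta'\|_2^2,
\end{equation*}
so $\beta \mapsto X_\beta$ is $B\sqrt{n}$-Lipschitz from $(\mathbb{R}^s, \|\cdot\|_2)$ into $(\mathcal{X}, \|\cdot\|_{\infty,2})$. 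Combining this with the standard volumetric cover of the Euclidean ball of radius $c_b$ in $\mathbb{R}^s$ yields
\begin{equation*}
\ln \mathcal{N}(\epsilon, \mathcal{X}, \|\cdot\|_{\infty,2}) \le s \log\!\left(1 + \frac{2 c_b B \sqrt{n}}{\epsilon}\right).
\end{equation*}

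With this entropy estimate in hand, I would evaluate the Dudley integral. The diameter of $\mathcal{Z}$ in $\|\cdot\|_{\infty,2}$ is $O(\sqrt{n})$ (from the same Cauchy--Schwarz bound applied to $\beta,\beta'$ in the ball of radius $c_b$), and the substitution $u = 2 c_b B \sqrt{n}\, v$ rescales the integral to
\begin{equation*}
g(\mathcal{Z}) \;\le\; 2 c_b B \sqrt{n} \int_0^1 \sqrt{s\,\log(1 + 1/v)}\, dv \;=\; O(\sqrt{sn}),
\end{equation*}
since $\sqrt{\log(1/v)}$ is integrable near zero. Plugging $g(\mathcal{Z}) = O(\sqrt{sn})$ into Theorem \ref{thm:chaining} produces the claimed rate up to absolute constants.

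The main obstacle, or rather the step that has to be tracked carefully, is the $\sqrt{n}$ inflation of the diameter: the Lipschitz constant of $\beta \mapsto X_\beta$ in $\|\cdot\|_{\infty,2}$ genuinely grows like $\sqrt{n}$, and the argument depends on the convergence of $\int_0^1 \sqrt{\log(1/v)}\,dv$ to keep the $n$-dependence in $g(\mathcal{Z})$ exactly $\sqrt{n}$ rather than larger. All of the remaining ingredients---the Lipschitz reduction from $\mathcal{Z}$ to $\mathcal{X}$, the Euclidean ball cover, and the final substitution into Theorem \ref{thm:chaining}---are routine.
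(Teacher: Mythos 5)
Your overall route is the same as the paper's: reduce from $\mathcal Z$ to $\mathcal X$ via the 1-Lipschitz property of $\log\phi$ (this is the paper's Lemma \ref{lem:lipschitz}), bound the $\|\cdot\|_{\infty,2}$-covering number of $\mathcal X$ by a Euclidean-ball cover of the $\beta$-parameter space, and evaluate the Dudley integral before invoking Theorem \ref{thm:chaining}. The problem is the final arithmetic. You correctly observe that, with the norm $\|Z\|_{\infty,2} = (\sum_{i=1}^n \|Z_i\|_\infty^2)^{1/2}$ as defined in Theorem \ref{thm:chaining}, the map $\beta \mapsto X_\beta$ is $B\sqrt{n}$-Lipschitz, so the diameter of $\mathcal X$ and hence $g(\mathcal Z)$ pick up a factor of $\sqrt{n}$, giving $g(\mathcal Z) = O(\sqrt{sn})$. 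But substituting this into Theorem \ref{thm:chaining} yields $(\sqrt{m}\ln m / n)\cdot \sqrt{sn} = \sqrt{sm/n}\,\ln m$, not the claimed $(\sqrt{sm}/n)\ln m$; your assertion that this ``produces the claimed rate'' is off by a factor of $\sqrt{n}$. To get the stated bound you need $g(\mathcal Z) = O(\sqrt{s})$ with no $n$-dependence.

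The paper's own proof obtains exactly that by asserting $\|X - X'\|_{\infty,2} \le \zeta \|\beta - \beta'\|_2$ with $\zeta = \max_i \|Z_i\|_{2,\infty}$, i.e.\ without the $\sqrt{n}$ you derive; under the stated definition of $\|\cdot\|_{\infty,2}$, summing the per-row bound $\|x_i - x_i'\|_\infty \le \zeta\|\beta-\beta'\|_2$ over $i=1,\ldots,n$ does produce $\sqrt{n}\,\zeta\|\beta-\beta'\|_2$, so your bookkeeping is the more careful one. The upshot is that your argument, as it stands, proves only the weaker rate $\sqrt{sm/n}\,\ln m$ (which, notably, would undercut the $n=1$ consistency discussion following the corollary), and to recover the stated rate you would need either a normalized version of the $\infty,2$ norm carried consistently through Theorem \ref{thm:chaining}, or some other mechanism that removes the $\sqrt{n}$; you cannot simply declare the two rates equal. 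Flagging the $\sqrt{n}$ inflation as ``the step that has to be tracked carefully'' and then not reconciling it with the target bound is precisely where your proof fails to close.
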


Notably when $n=1$ this bound is on the order of $\sqrt m \ln m$.
In the event that $P_{X^\star}$ is concentrated primarily on a single permutation for this user, and we resort to random guessing (i.e.~$\hat X_{1j} = 0$) then the KL divergence will be close to $\ln m! \approx m \ln m$.  
So, a reduction of the KL-divergence from order $m \ln m$ to $\sqrt m \ln m$ is a large improvement, and the above result should be understood to mean that we can achieve consistency even when $n=1$ (where consistency is measured relative to random guessing).

\begin{corollary}
\label{cor:person}
Assume the conditions to Lemma \ref{lem:basic}, the personalized ranking setting, \eqref{eq:persmodel}, and that $\log \phi$ is 1-Lipschitz,
$$
D(X^\star,\hat X) = O_{\mathbb P} \left( \sqrt{\frac{rm}{n}}\ln m\right).
$$
\end{corollary}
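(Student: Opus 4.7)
}
The plan is to apply Theorem~\ref{thm:chaining} by bounding the Dudley entropy integral $g(\mathcal Z)$ for the class $\mathcal Z=\{\log\phi(U^\top V): \|U\|_F\le c_u,\ \|V\|_F\le c_v\}$ in the $\|\cdot\|_{\infty,2}$ norm. First I would verify the hypothesis $\sup_{Z\in\mathcal Z}\|Z\|_\infty\le C$: since $\log\phi$ is the sigmoid, it takes values in $(0,1)$, so $C=1$ suffices. Also, since $\log\phi$ is $1$-Lipschitz, any $\epsilon$-cover of $\mathcal X$ in a given norm is an $\epsilon$-cover of $\mathcal Z$ in that same norm, so I can reduce to covering $\mathcal X$ itself.

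Next I would pass from $\|\cdot\|_{\infty,2}$ to the Frobenius norm. The elementary inequality $\max_j|Z_{ij}|\le \|Z_i\|_2$ gives $\|Z\|_{\infty,2}\le \|Z\|_F$, so it suffices to bound the Frobenius covering number of $\mathcal X$. Using the standard triangle decomposition
\[
\|U^\top V-\tilde U^\top \tilde V\|_F \le c_v\|U-\tilde U\|_F + c_u\|V-\tilde V\|_F,
\]
I would cover $U$ and $V$ separately in their Frobenius balls (of dimensions $rn$ and $rm$), obtaining
\[
\ln \mathcal N(\epsilon,\mathcal X,\|\cdot\|_F) \le r(n+m)\ln(C'/\epsilon)
\]
for some constant $C'$ depending only on $c_u,c_v$. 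The diameter of $\mathcal X$ in $\|\cdot\|_{\infty,2}$ is bounded by its Frobenius diameter, which is at most $2c_uc_v=O(1)$ since $\|U^\top V\|_F\le \|U\|_F\|V\|_F$.

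With these ingredients, the Dudley integral is
\[
g(\mathcal Z) \le \int_0^{2c_uc_v}\!\!\sqrt{r(n+m)\ln(C'/u)}\,\mathrm du \;=\; O\!\bigl(\sqrt{r(n+m)}\bigr),
\]
since $\int_0^{a}\sqrt{\ln(C'/u)}\,\mathrm du$ is finite for any finite $a$. Substituting into the conclusion of Theorem~\ref{thm:chaining} yields
\[
D(X^\star,\hat X) \;=\; O_{\mathbb P}\!\left(\tfrac{\sqrt m\ln m}{n}\cdot \sqrt{r(n+m)}\right) \;=\; O_{\mathbb P}\!\left(\sqrt{\tfrac{rm(n+m)}{n^2}}\,\ln m\right),
\]
which in the regime $m\lesssim n$ simplifies to the stated $O_{\mathbb P}(\sqrt{rm/n}\ln m)$.

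The main obstacle is getting the dimension factor in the entropy bound to come out as $r(n+m)$ rather than something larger: the naive cover treating $X$ as an $n\times m$ matrix of free parameters would give $nm$ and break the rate. Controlling the product parametrization $X=U^\top V$ jointly, and making sure the $\|\cdot\|_{\infty,2}$ diameter is bounded independently of $n,m$ (rather than growing like $\sqrt n$, which a crude elementwise bound would give), are the two places where care is needed; everything else is a straightforward application of Theorem~\ref{thm:chaining}.
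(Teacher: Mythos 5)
Your overall strategy---verifying the hypotheses of Theorem \ref{thm:chaining}, using the Lipschitz property to reduce covering $\mathcal Z$ to covering $\mathcal X$, and splitting the product parametrization via the triangle inequality---matches the paper's. The gap is in the covering step: by bounding $\|Z\|_{\infty,2}\le\|Z\|_F$ and covering both factors in the Frobenius norm, you pay entropy $r(n+m)\ln(C'/\epsilon)$ and end with $O_{\mathbb P}\bigl(\sqrt{rm(n+m)}\,\ln m/n\bigr)$, which only matches the claimed $O_{\mathbb P}(\sqrt{rm/n}\,\ln m)$ when $m=O(n)$. The corollary carries no such restriction, and a central motivation of the theory section is tracking dependence on a growing number of items (cf.\ the discussion after Corollary \ref{cor:simple}, where $n=1$, $m\to\infty$ is the regime of interest); there your bound degrades to roughly $(m/n)\sqrt r\,\ln m$, losing a factor of $\sqrt{m/n}$ relative to the statement. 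You flag the danger of an $nm$-dimensional cover, but even $r(n+m)$ is too large.

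The missing idea is that the $\infty,2$ norm takes a \emph{maximum} over the item index $j$, so the $V$-factor never needs to be covered in the Frobenius norm. The paper's decomposition is $\|X-X'\|_{\infty,2}\le\|U-U'\|_F\,\|V\|_{2,\infty}+\|U'\|_F\,\|V-V'\|_{2,\infty}$, so it suffices to cover the Frobenius ball of $V$'s in the much weaker $\|\cdot\|_{2,\infty}$ norm (maximum column norm). A volume comparison gives $\mathcal N(\epsilon,B_F,\|\cdot\|_{2,\infty})\le C\,(3/(\epsilon\sqrt m))^{rm}$, i.e.\ the cover is trivial for $\epsilon>3/\sqrt m$, and the corresponding piece of the Dudley integral is $\int_0^{3/\sqrt m}\sqrt{-rm\log(\epsilon\sqrt m/3)}\,\mathrm d\epsilon=O(\sqrt r)$, independent of $m$. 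The total entropy integral is then $O(\sqrt{nr}+\sqrt r)=O(\sqrt{nr})$ rather than your $O(\sqrt{r(n+m)})$, which yields the stated rate for all $m,n$. The remaining ingredients of your plan (the bound $C=1$ from the sigmoid, the Lipschitz reduction from $\mathcal Z$ to $\mathcal X$, the $O(1)$ diameter, and the $U$-factor Frobenius cover contributing $\sqrt{nr}$) agree with the paper and are fine.
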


Notably, even in the personalized setting, where each user has their own preferences, we can achieve $1/\sqrt n$ rates for fixed $m,r$.
Throughout these results the $O_{\mathbb P}$ notation only hides the constants $c_b, c_u, c_v$, and any dependence on $s,r,m,n$ is explicitly given.
While Theorem \ref{thm:chaining} gives us a generic result that can be applied to a large range of constraint sets, we believe that the parametric simple ranking and the low-rank personalized setting are the two most important listwise ranking problems.

\section{Experiments}
\label{exp}
In this section, we compare our proposed algorithm (SQL-Rank) with other state-of-the-art algorithms on real world datasets. 
Note that our algorithm works for both implicit feedback and explicit feedback settings. In the implicit feedback setting, all the ratings are $0$ or $1$; 
in the explicit feedback setting, explicit ratings (e.g., $1$ to $5$) are given but only to a subset of user-item pairs. 
Since many real world recommendation systems follow the implicit feedback setting (e.g., purchases, clicks, or checkins), we will first compare SQL-Rank on implicit feedback datasets and show it outperforms state-of-the-art algorithms. Then we will verify that our algorithm also performs well on explicit feedback problems. 
All experiments are conducted on a server with an Intel Xeon E5-2640 2.40GHz CPU and 64G RAM.

\subsection{Implicit Feedback}
In the implicit feedback setting we compare the following methods:
\begin{compactitem}
    \item SQL-Rank: our proposed algorithm implemented in Julia \footnote{\url{https://github.com/wuliwei9278/SQL-Rank}}. 
    \item Weighted-MF: the weighted matrix factorization algorithm by putting different weights on $0$ and $1$'s \cite{hu2008collaborative,hsieh2015pu}.
    \item BPR: the Bayesian personalized ranking method motivated by MLE \cite{rendle2009bpr}. For both Weighted-MF and BPR, we use the C++ code by Quora \footnote{\url{https://github.com/quora/qmf}}. 
\end{compactitem}
Note that other collaborative ranking methods such as Pirmal-CR++~\cite{wu2017large} and List-MF~\cite{shi2010list}
do not work for implicit feedback data, and we will compare with them later in the explicit feedback experiments. 
For the performance metric, we use precision@$k$ for $k = 1, 5, 10$ defined by 
\begin{equation}
    \text{precision}@k = \frac{\sum_{i=1}^{n}|\{1 \leq l \leq k: R_{i\Pi_{il}} = 1\}|}{n \cdot k},
\end{equation} where $R$ is the rating matrix and $\Pi_{il}$ gives the index of the $l$-th ranked item for user $i$ among all the items not rated by user $i$ in the training set. 

We use rank $r = 100$ and tune regularization parameters for all three algorithms using a random sampled validation set. For Weighted-MF, we also tune the confidence weights on unobserved data. For BPR and SQL-Rank, we fix the ratio of subsampled unobserved $0$'s versus observed $1$'s to be $3:1$, which gives the best performance for both BPR and SQL-rank in practice. 

We experiment on the following four datasets. 
Note that the original data of Movielens1m, Amazon and Yahoo-music are
ratings from $1$ to $5$, so we follow the procedure in \cite{rendle2009bpr,yu2017selection} to preprocess the data. 
We transform ratings of $4, 5$ into $1$'s and the rest entries (with rating $1, 2, 3$ and unknown) as $0$'s.
Also, we remove users with very few $1$'s in the corresponding row to make sure there are enough $1$'s for both training and testing. 
For Amazon, Yahoo-music and Foursquare, we discard users with less than $20$ ratings and randomly select $10$ $1$'s as training and use the rest as testing. 
Movielens1m has more ratings than others, so we keep users with more than $60$ ratings, and randomly sample 50 of them as training. 
\begin{compactitem}
\item Movielens1m: a popular movie recommendation data with $6,040$ users and $3,952$ items.
\item Amazon: the Amazon purchase rating data for musical instruments \footnote{\url{http://jmcauley.ucsd.edu/data/amazon/}} with $339,232$ users and $83,047$ items. 
\item Yahoo-music: the Yahoo music rating data set \footnote{\url{https://webscope.sandbox.yahoo.com/catalog.php?datatype=r&did=3}} which contains $15,400$ users and $1,000$ items. 
\item Foursquare: 
a location check-in data\footnote{\url{https://sites.google.com/site/yangdingqi/home/foursquare-dataset}}. The data set contains $3,112$ users and $3,298$ venues with $27,149$ check-ins. The data set is already in the form of ``0/1'' so we do not need to do any transformation. 
\end{compactitem}

The experimental results are shown in Table~\ref{implicit-combined}. We find that SQL-Rank outperforms both Weighted-MF and BPR in most cases.

\begin{table}[t]
\caption{Comparing implicit feedback methods on various datasets.}
\label{implicit-combined}
\begin{center}
\begin{small}
\begin{sc}
\resizebox{0.5\textwidth}{!}{
\begin{tabular}{lccccr}
\toprule
Dataset & Method &  P@1 & P@5 & P@10 \\
\midrule
\multirow{3}{*}{Movielens1m} & SQL-Rank   &  {\bfseries 0.73685}  & {\bfseries 0.67167}   & 0.61833 \\
& Weighted-MF & 0.54686 & 0.49423 &  0.46123  \\
& BPR   & 0.69951 & 0.65608 &  {\bfseries 0.62494}  \\
\midrule
\multirow{3}{*}{Amazon} & SQL-Rank   &  0.04255  & {\bfseries 0.02978}   & {\bfseries 0.02158} \\
& Weighted-MF & 0.03647 & 0.02492 &  0.01914  \\
& BPR   & {\bfseries 0.04863} & 0.01762 &   0.01306  \\
\midrule
\multirow{3}{*}{Yahoo music} & SQL-Rank   &  {\bfseries 0.45512}  & {\bfseries 0.36137}   & {\bfseries 0.30689} \\
& Weighted-MF &  0.39075 & 0.31024 & 0.27008  \\
& BPR   & 0.37624 & 0.32184 &   0.28105  \\
\midrule
\multirow{3}{*}{Foursquare} & SQL-Rank   &  {\bfseries 0.05825}  & {\bfseries 0.01941}   & {\bfseries 0.01699} \\
& Weighted-MF & 0.02184 & 0.01553 &  0.01407  \\
& BPR   & 0.03398 & 0.01796 &   0.01359  \\
\bottomrule
\end{tabular}
}
\end{sc}
\end{small}
\end{center}
\vskip -0.1in
\end{table}

\subsection{Explicit Feedback}
Next we compare the following methods in the explicit feedback setting: 
\begin{compactitem}
    \item SQL-Rank: our proposed algorithm implemented in Julia. Note that in the explicit feedback setting our algorithm
    only considers pairs with explicit ratings. 
    \item List-MF: the listwise algorithm using the cross entropy loss between observed rating and top $1$ probability \cite{shi2010list}. We use the C++ implementation on github\footnote{\url{https://github.com/gpoesia/listrankmf}}.
    \item MF: the classical matrix factorization algorithm in \cite{koren2008factorization} utilizing a pointwise loss solved by SGD. We implemented SGD in Julia. 
    \item Primal-CR++: the recently proposed pairwise algorithm in \cite{wu2017large}. We use the Julia implementation released by the authors\footnote{\url{https://github.com/wuliwei9278/ml-1m}}.
\end{compactitem}

Experiments are conducted on Movielens1m and Yahoo-music datasets. We perform the same procedure as in implicit feedback setting except that we do not need to mask the ratings into ``0/1''. 

We measure the performance in the following two ways:
\begin{compactitem}
    \item NDCG$@k$: defined as:
        \begin{equation*} 
            \text{NDCG}@k = \frac{1}{n} \sum_{i = 1}^{n} \frac{\text{DCG}@k(i, \Pi_i)}{\text{DCG}@k(i, \Pi_i^*)}, 
        \end{equation*} where $i$ represents $i$-th user and
        \begin{equation*} 
        \text{DCG}@k(i, \Pi_i)= \sum_{l = 1}^{k} \frac{2^{R_{i\Pi_{il}}} - 1}{log_2(l + 1)}. 
        \end{equation*}
        In the DCG definition, $\Pi_{il}$ represents the index of the $l$-th ranked item for user $i$ in test data based on the learned score matrix $X$. $R$ is the rating matrix and $R_{ij}$ is the rating given to item $j$ by user $i$. $\Pi_i^*$ is the ordering provided by the ground truth rating.
    \item Precision@$k$: defined as a fraction of relevant items among the top $k$ recommended items: 
        \begin{equation*}
            \text{precision}@k = \frac{\sum_{i=1}^{n}|\{1 \leq l \leq k: 4 \leq R_{i\Pi_{il}} \leq 5\}|}{n \cdot k},
        \end{equation*}
        here we consider items with ratings assigned as $4$ or $5$ as relevant. $R_{ij}$ follows the same definitions above but unlike before $\Pi_{il}$ gives the index of the $l$-th ranked item for user $i$ among all the items that are not rated by user $i$ in the training set (including both rated test items and unobserved items). 
\end{compactitem}

As shown in Table~\ref{explicit-combined}, our proposed listwise algorithm SQL-Rank outperforms previous listwise method List-MF in both NDCG@$10$ and precision@$1,5,10$. It verifies the claim that log-likelihood loss outperforms the cross entropy loss if we use it correctly. When listwise algorithm SQL-Rank is compared with pairwise algorithm Primal-CR++, the performances between SQL-Rank and Primal-CR++ are quite similar, slightly lower for NDCG@$10$ but higher for precision@$1,5,10$. Pointwise method MF is doing okay in NDCG but really bad in terms of precision. 
Despite having comparable NDCG, the predicted top $k$ items given by MF are quite different from those given by other algorithms utilizing a ranking loss. The ordered lists based on SQL-Rank, Primal-CR++ and List-MF, on the other hand, share a lot of similarity and only have minor difference in ranking of some items. It is an interesting phenomenon that we think is worth exploring further in the future. 

\begin{table}[t]
\caption{Comparing explicit feedback methods on various datasets.}
\label{explicit-combined}
\begin{center}
\begin{small}
\begin{sc}
\resizebox{0.5\textwidth}{!}{
\begin{tabular}{lccccr}
\toprule
Dataset & Method & NDCG@$10$ & P@1 & P@5 & P@10 \\
\midrule
\multirow{3}{*}{Movielens1m} & SQL-Rank &  0.75076   &  {\bfseries 0.50736}  & {\bfseries 0.43692}   & {\bfseries 0.40248} \\
& List-MF & 0.73307 & 0.45226 &  0.40482   & 0.38958 \\
& Primal-CR++    & {\bfseries 0.76826} & 0.49365 & 0.43098   & 0.39779 \\
& MF    & 0.74661 & 0.00050 & 0.00096   & 0.00134 \\
\midrule
\multirow{3}{*}{Yahoo music} & SQL-Rank &  0.66150   &  {\bfseries 0.14983}  & {\bfseries 0.12144}   & {\bfseries 0.10192} \\
& List-MF & 0.67490 & 0.12646 &  0.11301   & 0.09865 \\
& Primal-CR++    &  0.66420 & 0.14291 & 0.10787   & 0.09104 \\
& MF    & {\bfseries 0.69916} & 0.04944 & 0.03105   & 0.04787 \\

\bottomrule
\end{tabular}
}
\end{sc}
\end{small}
\end{center}
\vskip -0.2in
\end{table}


\subsection{Training speed}

To illustrate the training speed of our algorithm, we plot precision@$1$ versus training time for the Movielen1m dataset and the Foursquare dataset. Figure~\ref{time} and Figure~\ref{time2} (in the appendix) show that our algorithm SQL-Rank is faster than BPR and Weighted-MF. Note that our algorithm is implemented in Julia while BPR and Weighted-MF are highly-optimized C++ codes (usually at least 2 times faster than Julia) released by Quora.  
This speed difference makes sense as our algorithm takes $O(n \bar{m} r)$ time, which is linearly to the observed ratings. In comparison, pair-wise model such as BPR
has $O(n\bar{m}^2)$ pairs, so will take $O(n \bar{m}^2 r)$ time for each epoch. 

 \begin{figure}[ht]
\vskip -0.1in
\begin{center}
\centerline{\includegraphics[width=0.8\columnwidth]{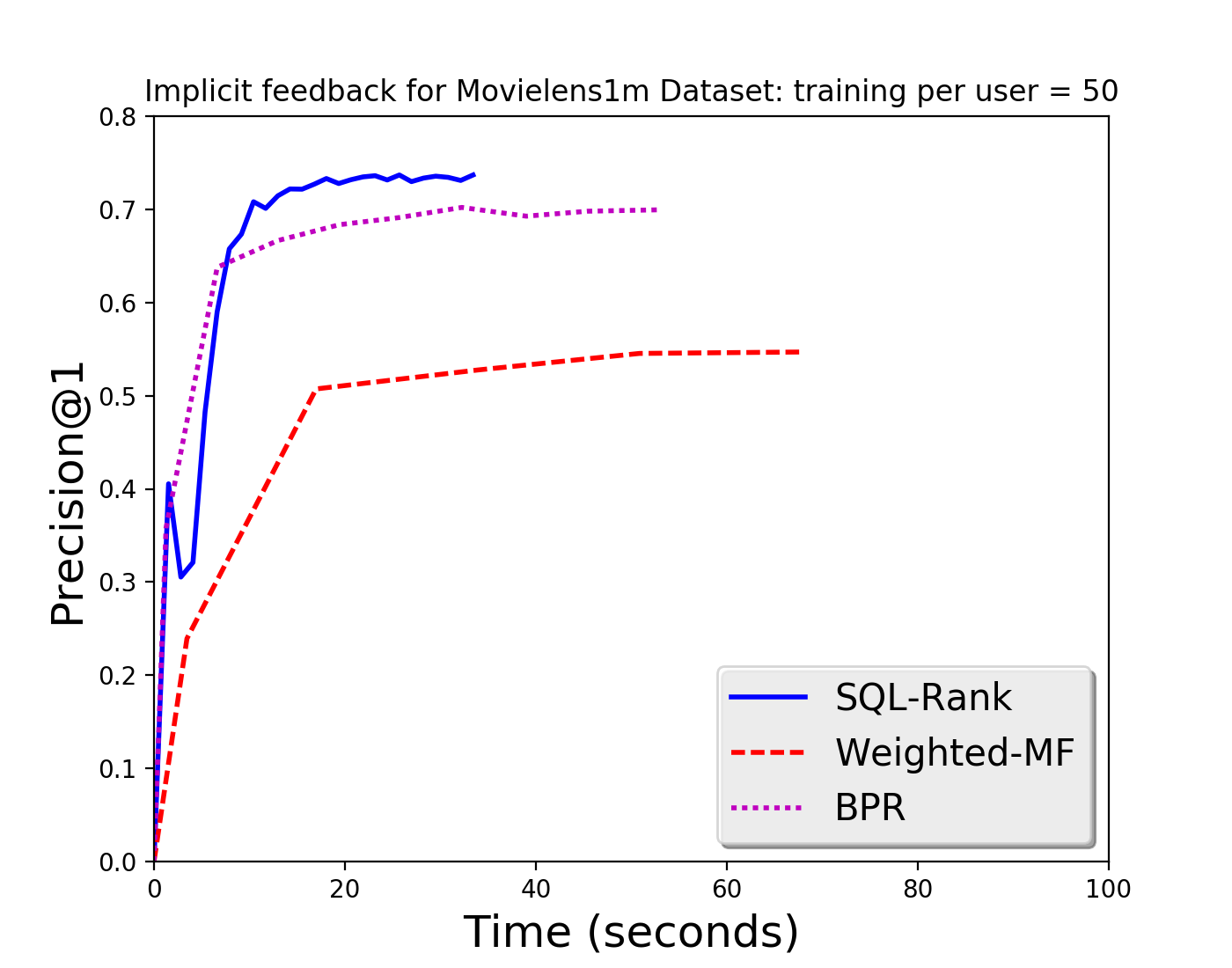}}
\caption{Training time of implicit feedback methods.}
\label{time}
\end{center}
\vskip -0.4in
\end{figure}

\subsection{Effectiveness of Stochastic Queuing (SQ)}

One important innovation in our SQL-Rank algorithm is the Stochastic Queuing (SQ) Process for handling ties. 
To illustrate the effectiveness of the SQ process, we compare our algorithm with and without SQ. Recall that without SQ means we fix a certain permutation matrix $\Pi$ and optimize with respect to it throughout all iterations without generating new $\Pi$, while SQ allows us to update using a new permutation at each time. As shown 
Table~\ref{implicit-sq} and Figure~\ref{sqp} (in the appendix),
the performance gain from SQ in terms of precision is substantial (more than $10\%$) on Movielen1m dataset. It verifies the claim that our way of handling ties and missing data is very effective and improves the ranking results by a lot. 

\begin{table}[t]
\caption{Effectiveness of Stochastic Queuing Process.}
\label{implicit-sq}
\vskip 0.15in
\begin{center}
\begin{small}
\begin{sc}
\resizebox{0.35\textwidth}{!}{
\begin{tabular}{lcccr}
\toprule
Method &  P@1 & P@5 & P@10 \\
\midrule
With SQ   &  {\bfseries 0.73685}  & {\bfseries 0.67167}   & {\bfseries 0.61833} \\
Without SQ & 0.62763 & 0.58420 &  0.55036  \\
\bottomrule
\end{tabular}
}
\end{sc}
\end{small}
\end{center}
\vskip -0.2in
\end{table}

%


\subsection{Effectiveness of using the Full List}

Another benefit of our algorithm is that we are able to minimize top $k$ probability with much larger $k$ and without much overhead. Previous approaches 
\cite{huang2015listwise}
already pointed out increasing $k$ leads to better ranking results, but
their complexity is exponential to $k$ so they were not able to have $k > 1$. 
To show the effectiveness of using permutation probability for full lists rather than using the top $k$ probability for top $k$ partial lists in the likelihood loss, we fix everything else to be the same and only vary $k$ in Equation~\eqref{eq:convex}. We obtain the results in Table~\ref{explicit-ml1m-topk} and Figure~\ref{explicit_topk} (in the appendix). It shows that the larger $k$ we use, the better the results we can get. Therefore, in the final model, we set $k$ to be the maximum number (length of the  observed list.)   


\vspace{-10pt}\begin{table}[th]
\caption{Comparing different $k$ on Movielens1m data set using 50 training data per user.}
\label{explicit-ml1m-topk}
\vskip 0.15in
\begin{center}
\begin{small}
\begin{sc}
\vspace{-15pt}\resizebox{0.5\textwidth}{!}{
\begin{tabular}{lcccr}
\toprule
$k$ & NDCG@$10$ & P@1 & P@5 & P@10 \\
\midrule
5 &  0.64807   &  0.39156  & 0.33591   & 0.29855 \\
10 & 0.67746 & 0.43118 &  0.34220   & 0.33339 \\
25    & 0.74589 & 0.47003 & 0.42874   & 0.39796 \\
50 (full list) & {\bfseries 0.75076} & {\bfseries 0.50736} & {\bfseries 0.43692}  & {\bfseries 0.40248} \\
\bottomrule
\end{tabular}
}
\end{sc}
\end{small}
\end{center}
\vskip -0.2in
\end{table}

\section{Conclusions}
In this paper, we propose a listwise approach for collaborative ranking and provide
an efficient algorithm to solve it. 
Our methodology can incorporate both implicit and explicit feedback, and can gracefully handle ties and missing data. 
In experiments, we demonstrate our algorithm outperforms existing state-of-the art methods in terms of top $k$ recommendation precision. 
We also provide a theoretical framework for analyzing listwise methods highlighting the dependence on the number of users and items. 

\newpage

\section*{Acknowledgements}

JS is partially supported by NSF DMS-1712996. CJH acknowledge the support by NSF IIS-1719097, Google Cloud and Nvidia. \\

Side Note by Liwei Wu: SQL in SQL-Rank is not only the abbreviation for Stochastically Queuing Listwise, but also name initials of Liwei's girlfriend ShuQing Li. Special thanks for her support. 




\bibliography{listwise}

\begin{thebibliography}{27}
\providecommand{\natexlab}[1]{#1}
\providecommand{\url}[1]{\texttt{#1}}
\expandafter\ifx\csname urlstyle\endcsname\relax
  \providecommand{\doi}[1]{doi: #1}\else
  \providecommand{\doi}{doi: \begingroup \urlstyle{rm}\Url}\fi

\bibitem[Agarwal(2006)]{agarwal2006ranking}
Agarwal, S.
\newblock Ranking on graph data.
\newblock In \emph{Proceedings of the 23rd international conference on Machine
  learning}, pp.\  25--32. ACM, 2006.

\bibitem[Cao et~al.(2007)Cao, Qin, Liu, Tsai, and Li]{cao2007learning}
Cao, Z., Qin, T., Liu, T.-Y., Tsai, M.-F., and Li, H.
\newblock Learning to rank: from pairwise approach to listwise approach.
\newblock In \emph{Proceedings of the 24th international conference on Machine
  learning}, pp.\  129--136. ACM, 2007.

\bibitem[Chapelle \& Keerthi(2010)Chapelle and Keerthi]{chapelle2010efficient}
Chapelle, O. and Keerthi, S.~S.
\newblock Efficient algorithms for ranking with svms.
\newblock \emph{Information Retrieval}, 13\penalty0 (3):\penalty0 201--215,
  2010.

\bibitem[Freund et~al.(2003)Freund, Iyer, Schapire, and
  Singer]{freund2003efficient}
Freund, Y., Iyer, R., Schapire, R.~E., and Singer, Y.
\newblock An efficient boosting algorithm for combining preferences.
\newblock \emph{Journal of machine learning research}, 4\penalty0
  (Nov):\penalty0 933--969, 2003.

\bibitem[Hill et~al.(1995)Hill, Stead, Rosenstein, and
  Furnas]{hill1995recommending}
Hill, W., Stead, L., Rosenstein, M., and Furnas, G.
\newblock Recommending and evaluating choices in a virtual community of use.
\newblock In \emph{Proceedings of the SIGCHI conference on Human factors in
  computing systems}, pp.\  194--201. ACM Press/Addison-Wesley Publishing Co.,
  1995.

\bibitem[Hsieh et~al.(2015)Hsieh, Natarajan, and Dhillon]{hsieh2015pu}
Hsieh, C.-J., Natarajan, N., and Dhillon, I.
\newblock Pu learning for matrix completion.
\newblock In \emph{International Conference on Machine Learning}, pp.\
  2445--2453, 2015.

\bibitem[Hu et~al.(2008)Hu, Koren, and Volinsky]{hu2008collaborative}
Hu, Y., Koren, Y., and Volinsky, C.
\newblock Collaborative filtering for implicit feedback datasets.
\newblock In \emph{Data Mining, 2008. ICDM'08. Eighth IEEE International
  Conference on}, pp.\  263--272. Ieee, 2008.

\bibitem[Huang et~al.(2015)Huang, Wang, Liu, Ma, Chen, and
  Veijalainen]{huang2015listwise}
Huang, S., Wang, S., Liu, T.-Y., Ma, J., Chen, Z., and Veijalainen, J.
\newblock Listwise collaborative filtering.
\newblock In \emph{Proceedings of the 38th International ACM SIGIR Conference
  on Research and Development in Information Retrieval}, pp.\  343--352. ACM,
  2015.

\bibitem[Joachims(2002)]{joachims2002optimizing}
Joachims, T.
\newblock Optimizing search engines using clickthrough data.
\newblock In \emph{Proceedings of the eighth ACM SIGKDD international
  conference on Knowledge discovery and data mining}, pp.\  133--142. ACM,
  2002.

\bibitem[Kabbur et~al.(2013)Kabbur, Ning, and Karypis]{kabbur2013fism}
Kabbur, S., Ning, X., and Karypis, G.
\newblock Fism: factored item similarity models for top-n recommender systems.
\newblock In \emph{Proceedings of the 19th ACM SIGKDD international conference
  on Knowledge discovery and data mining}, pp.\  659--667. ACM, 2013.

\bibitem[Koren(2008)]{koren2008factorization}
Koren, Y.
\newblock Factorization meets the neighborhood: a multifaceted collaborative
  filtering model.
\newblock In \emph{Proceedings of the 14th ACM SIGKDD international conference
  on Knowledge discovery and data mining}, pp.\  426--434. ACM, 2008.

\bibitem[Lan et~al.(2009)Lan, Liu, Ma, and Li]{lan2009generalization}
Lan, Y., Liu, T.-Y., Ma, Z., and Li, H.
\newblock Generalization analysis of listwise learning-to-rank algorithms.
\newblock In \emph{Proceedings of the 26th Annual International Conference on
  Machine Learning}, pp.\  577--584. ACM, 2009.

\bibitem[Mikolov et~al.(2013)Mikolov, Sutskever, Chen, Corrado, and
  Dean]{mikolov2013distributed}
Mikolov, T., Sutskever, I., Chen, K., Corrado, G.~S., and Dean, J.
\newblock Distributed representations of words and phrases and their
  compositionality.
\newblock In \emph{Advances in neural information processing systems}, pp.\
  3111--3119, 2013.

\bibitem[Mnih \& Salakhutdinov(2008)Mnih and
  Salakhutdinov]{mnih2008probabilistic}
Mnih, A. and Salakhutdinov, R.~R.
\newblock Probabilistic matrix factorization.
\newblock In \emph{Advances in neural information processing systems}, pp.\
  1257--1264, 2008.

\bibitem[Pahikkala et~al.(2009)Pahikkala, Tsivtsivadze, Airola, J{\"a}rvinen,
  and Boberg]{pahikkala2009efficient}
Pahikkala, T., Tsivtsivadze, E., Airola, A., J{\"a}rvinen, J., and Boberg, J.
\newblock An efficient algorithm for learning to rank from preference graphs.
\newblock \emph{Machine Learning}, 75\penalty0 (1):\penalty0 129--165, 2009.

\bibitem[Pan et~al.(2017)Pan, Yang, Duan, Tan, and Ming]{pan2017transfer}
Pan, W., Yang, Q., Duan, Y., Tan, B., and Ming, Z.
\newblock Transfer learning for behavior ranking.
\newblock \emph{ACM Transactions on Intelligent Systems and Technology (TIST)},
  8\penalty0 (5):\penalty0 65, 2017.

\bibitem[Park et~al.(2015)Park, Neeman, Zhang, Sanghavi, and
  Dhillon]{park2015preference}
Park, D., Neeman, J., Zhang, J., Sanghavi, S., and Dhillon, I.
\newblock Preference completion: Large-scale collaborative ranking from
  pairwise comparisons.
\newblock In \emph{International Conference on Machine Learning}, pp.\
  1907--1916, 2015.

\bibitem[Rendle et~al.(2009)Rendle, Freudenthaler, Gantner, and
  Schmidt-Thieme]{rendle2009bpr}
Rendle, S., Freudenthaler, C., Gantner, Z., and Schmidt-Thieme, L.
\newblock Bpr: Bayesian personalized ranking from implicit feedback.
\newblock In \emph{Proceedings of the twenty-fifth conference on uncertainty in
  artificial intelligence}, pp.\  452--461. AUAI Press, 2009.

\bibitem[Schafer et~al.(2007)Schafer, Frankowski, Herlocker, and
  Sen]{schafer2007collaborative}
Schafer, J.~B., Frankowski, D., Herlocker, J., and Sen, S.
\newblock Collaborative filtering recommender systems.
\newblock In \emph{The adaptive web}, pp.\  291--324. Springer, 2007.

\bibitem[Shi et~al.(2010)Shi, Larson, and Hanjalic]{shi2010list}
Shi, Y., Larson, M., and Hanjalic, A.
\newblock List-wise learning to rank with matrix factorization for
  collaborative filtering.
\newblock In \emph{Proceedings of the fourth ACM conference on Recommender
  systems}, pp.\  269--272. ACM, 2010.

\bibitem[Talagrand(2006)]{talagrand2006generic}
Talagrand, M.
\newblock \emph{The generic chaining: upper and lower bounds of stochastic
  processes}.
\newblock Springer Science \& Business Media, 2006.

\bibitem[Wang et~al.(2017)Wang, Yu, Zhang, Gong, Xu, Wang, Zhang, and
  Zhang]{wang2017irgan}
Wang, J., Yu, L., Zhang, W., Gong, Y., Xu, Y., Wang, B., Zhang, P., and Zhang,
  D.
\newblock Irgan: A minimax game for unifying generative and discriminative
  information retrieval models.
\newblock In \emph{Proceedings of the 40th International ACM SIGIR conference
  on Research and Development in Information Retrieval}, pp.\  515--524. ACM,
  2017.

\bibitem[Weimer et~al.(2008)Weimer, Karatzoglou, Le, and Smola]{weimer2008cofi}
Weimer, M., Karatzoglou, A., Le, Q.~V., and Smola, A.~J.
\newblock Cofi rank-maximum margin matrix factorization for collaborative
  ranking.
\newblock In \emph{Advances in neural information processing systems}, pp.\
  1593--1600, 2008.

\bibitem[Wu et~al.(2017)Wu, Hsieh, and Sharpnack]{wu2017large}
Wu, L., Hsieh, C.-J., and Sharpnack, J.
\newblock Large-scale collaborative ranking in near-linear time.
\newblock In \emph{Proceedings of the 23rd ACM SIGKDD International Conference
  on Knowledge Discovery and Data Mining}, pp.\  515--524. ACM, 2017.

\bibitem[Wu et~al.(2016)Wu, DuBois, Zheng, and Ester]{wu2016collaborative}
Wu, Y., DuBois, C., Zheng, A.~X., and Ester, M.
\newblock Collaborative denoising auto-encoders for top-n recommender systems.
\newblock In \emph{Proceedings of the Ninth ACM International Conference on Web
  Search and Data Mining}, pp.\  153--162. ACM, 2016.

\bibitem[Xia et~al.(2008)Xia, Liu, Wang, Zhang, and Li]{xia2008listwise}
Xia, F., Liu, T.-Y., Wang, J., Zhang, W., and Li, H.
\newblock Listwise approach to learning to rank: theory and algorithm.
\newblock In \emph{Proceedings of the 25th international conference on Machine
  learning}, pp.\  1192--1199. ACM, 2008.

\bibitem[Yu et~al.(2017)Yu, Bilenko, and Lin]{yu2017selection}
Yu, H.-F., Bilenko, M., and Lin, C.-J.
\newblock Selection of negative samples for one-class matrix factorization.
\newblock In \emph{Proceedings of the 2017 SIAM International Conference on
  Data Mining}, pp.\  363--371. SIAM, 2017.

\end{thebibliography}
\bibliographystyle{icml2018}

\newpage~\newpage
\appendix
\section{Supplement to ``A Listwise Approach to Collaborative Ranking"}




\subsection{Proofs in Theory section}

\begin{proof}[Proof of Theorem \ref{thm:gen}.]
Notice that $\Pi_{i1}$ is the argument, $k$, that minimizes $Y_{ik}$, and 
$\mathbb P \{ \Pi_{i1} = k \} = \mathbb P\{ Y_{ik} \le \min \{ Y_{ij}\}_{j \ne k} \}$.
Furthermore, $\min \{ Y_{ij}\}_{j \ne k}$ is exponential with rate parameter $\sum_{j\ne k} \phi(X_{ij})$ and is independent of $Y_{ik}$.  Hence,
\begin{align*}
&\mathbb P\left\{ Y_{ik} \le \min \{ Y_{ij}\}_{j \ne k} \right\} \\
&= \int_0^\infty \phi(X_{ik}) e^{-u \phi(X_{ik})} e^{- \sum_{j \ne k} u \phi(X_{ij})} {\rm d}u \\
&= \frac{\phi(X_{ik})}{\sum_j \phi(X_{ij})}.
\end{align*}
Furthermore,
\begin{align*}
&\mathbb P \{ \Pi_i | \Pi_{i1} \} = \mathbb P \{ Y_{\Pi_{i_2}} \le \ldots \le Y_{\Pi_{im}} | Y_{\Pi_{ij}} \ge Y_{\Pi_{i1}}, \forall j\} \\
&= \mathbb P \{ Y_{\Pi_{i2}} - Y_{\Pi_{i1}} \le \ldots \le Y_{\Pi_{im}} - Y_{\Pi_{i1}} | Y_{\Pi_{ij}} \ge Y_{\Pi_{i1}}, \forall j\}.
\end{align*}
By the memorylessness property, we have that the joint distribution of $Y_{\Pi_{i2}} - Y_{\Pi_{i1}}, \ldots, Y_{\Pi_{im}} - Y_{\Pi_{i1}} | Y_{\Pi_{ij}} \ge Y_{\Pi_{i1}}$, $\forall j>1$ is equivalent to the distribution of $Y_{\Pi_{i2}}, \ldots, Y_{\Pi_{im}}$.
Hence, we can apply induction with the previous argument, and the tower property of conditional probability.
\end{proof}

\begin{proof}[Proof of Lemma \ref{lem:basic}.]
By optimality, 
\begin{equation*}
\frac 1n \sum_{i=1}^n - \log P_{\hat X_i}(\Pi_i) \le \frac 1n \sum_{i=1}^n - \log P_{X_i^\star}(\Pi_i).
\end{equation*}
Which is equivalent to 
\begin{equation*}
\frac 1n \sum_{i=1}^n - \log \frac{P_{X_i^\star}(\Pi_i)}{P_{\hat X_i}(\Pi_i)} \ge 0.
\end{equation*}
Thus, we can subtract the expectation,
\begin{align*}
&\frac 1n \sum_{i=1}^n \mathbb E \log \frac{P_{X_i^\star}(\Pi_i)}{P_{\hat X_i}(\Pi_i)} \\
&\le - \frac 1n \sum_{i=1}^n \left( \log \frac{P_{X_i^\star}(\Pi_i)}{P_{\hat X_i}(\Pi_i)} - \mathbb E \log \frac{P_{X_i^\star}(\Pi_i)}{P_{\hat X_i}(\Pi_i)} \right)
\end{align*}
where the expectation $\mathbb E$ is with respect to the draw of $\Pi_i$ conditional on $X$.
\end{proof}

\begin{lemma}
\label{lem:bdd_diffs}
Let $\pi$ be a permutation vector and $x$ be a score vector each of length $m$.
Suppose that $|\log \phi(x_j)| \le C$ for all $j=1,\ldots, m$. 
Define the relative loss function,
$$
L_{x,x'}(\pi) := \log \frac{P_{x}(\pi)}{P_{x'}(\pi)}.
$$
Consider translating an item in position $\ell$ to position $\ell'$ in the permutation $\pi$, thus forming $\pi'$ where $\pi'_{\ell'} = \pi_\ell$.
Specifically,
$\pi'_k = \pi_k$ if $k < \min\{\ell,\ell'\}$ or $k > \max\{\ell,\ell'\}$;
if $\ell < \ell'$ then $\pi'_k = \pi_{k+1}$ for $k=\ell,\ldots,\ell'-1$;
if $\ell' < \ell$ then $\pi'_k = \pi_{k-1}$ for $k=\ell'+1,\ldots,\ell$.
The relative loss function has bounded differences in the sense that
$$
|L_{x,x'}(\pi) - L_{x,x'}(\pi')| \le C_0 \|\log \phi(x) - \log \phi(x') \|_\infty,
$$
where $\phi(x)$ is applied elementwise and $C_0 = 2 + e^{2C} \ln(m+1)$.
\end{lemma}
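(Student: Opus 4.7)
The plan is to use permutation invariance to cancel part of the loss and then establish a harmonic-type bound on what remains. Write $\log P_x(\pi) = \sum_{j=1}^m \log\phi(x_{\pi_j}) - \sum_{j=1}^m \log A_j(\pi)$, with $A_j(\pi) := \sum_{l=j}^m \phi(x_{\pi_l})$, and similarly $A'_j(\pi)$ for $x'$. The leading sum is a sum over all items (each item contributes $\log\phi$ of its score exactly once), hence is invariant under permutation; it drops out when we compute $L_{x,x'}(\pi) - L_{x,x'}(\pi')$. What remains is $\sum_{j=1}^m T_j$ with $T_j := \log(A_j(\pi)/A'_j(\pi)) - \log(A_j(\pi')/A'_j(\pi'))$.

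Next, assuming WLOG $\ell < \ell'$ (the case $\ell > \ell'$ is symmetric), the multiset of items at positions $\geq j$ is identical under $\pi$ and $\pi'$ whenever $j \leq \ell$ or $j > \ell'$; hence $T_j = 0$ outside the window $j \in \{\ell+1, \ldots, \ell'\}$. Inside the window, the two tails differ by a single item: $\pi_j$ lies only in $\pi$'s tail and $p := \pi_\ell$ lies only in $\pi'$'s tail, while the remaining items are shared.

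The crux is to prove a per-term bound of the form $|T_j| \leq C' e^{2C} \|\log\phi(x) - \log\phi(x')\|_\infty / (m-j+1)$ for $j$ in the window. I would express $g_j(\pi) := \log(A_j(\pi)/A'_j(\pi))$ as an integral along the interpolating path $a^{(t)} = a' + t\delta$ with $\delta := \log\phi(x) - \log\phi(x')$, yielding $g_j(\pi) = \int_0^1 \sum_{l \geq j} W^{(t)}_l(\pi)\, \delta_{\pi_l}\,dt$, where $W^{(t)}_l(\pi)$ are softmax weights over the tail of $\pi$. Writing $T_j$ as the difference of two such integrals, the integrand decomposes into (i) two boundary terms $W_{\pi_j}(\pi)\,\delta_{\pi_j}$ and $W_p(\pi')\,\delta_p$ coming from the items unique to each tail, plus (ii) a reweighting term $\sum_k (W_k(\pi) - W_k(\pi'))\,\delta_k$ over the shared items. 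The uniform bound $W_l \leq e^{2C}/(m-j+1)$, valid because each denominator summand is at least $e^{-C}$ and there are $m-j+1$ of them, controls (i) directly; for (ii), the additional identity $A_j(\pi) - A_j(\pi') = \phi(x_{\pi_j}) - \phi(x_p)$ combined with the same denominator bound gives a matching $1/(m-j+1)$ factor.

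Finally, summing the per-$j$ bound produces a harmonic sum $\sum_{j=\ell+1}^{\ell'} 1/(m-j+1) \leq \ln(m+1)$, delivering the advertised $e^{2C}\ln(m+1)$ factor. The additive $2$ in $C_0$ absorbs crude $|T_j| \leq 2\|\delta\|_\infty$ estimates used for the handful of indices $j$ where $m-j+1$ is too small for the relative bound to be useful (and where the linearization $\log(1+x)\approx x$ incurs non-negligible remainder). The main technical obstacle is the bookkeeping in (ii) that produces the $1/(m-j+1)$ factor rather than a constant: without it, a naive bound $|T_j|\leq 2\|\delta\|_\infty$ gives a final constant of order $m$, which would be insufficient for the bounded-differences input to the concentration argument underlying Theorem~\ref{thm:chaining}.
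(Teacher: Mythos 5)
Your proposal is correct and follows the same overall strategy as the paper's proof: cancel the permutation-invariant numerator $\sum_j \log\phi(x_{\pi_j})$, observe that the tail sums change only for indices $j$ strictly between $\ell$ and $\ell'$ (where the two tails differ by the single exchange of $\pi_j$ for $\pi_\ell$), prove a per-index bound of order $e^{O(C)}\,\delta/(m-j+1)$ with $\delta := \|\log\phi(x)-\log\phi(x')\|_\infty$ using the lower bound $\Lambda_j \ge (m-j+1)e^{-C}$, and sum the resulting harmonic series to get the $\ln(m+1)$ factor. Where you differ is in how the per-index estimate is executed. The paper uses an exact algebraic identity: it rewrites $\log(\Lambda_j - \lambda_j + \lambda_\ell) - \log \Lambda_j$ so that the window contributes terms $\log(1+\lambda_\ell/\Lambda_j)$ plus two boundary terms $\log\Lambda_\ell - \log\Lambda_{\ell'}$, bounds each boundary log-ratio of sums by the maximal log-ratio of summands (hence by $\delta$, giving the additive $2$ in $C_0$), and compares window terms for $x$ and $x'$ via $|\log(1+\alpha_j)-\log(1+e^{-\delta}\alpha_j)| \le |1-e^{-\delta}|\,\alpha_j$. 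You instead linearize $\log A_j - \log A_j'$ as an integral of softmax weights and bound the difference of integrands by splitting into the two tail-unique items plus a reweighting term over the shared items. Both routes work, and your bookkeeping for the reweighting term is sound (the identity $A_j(\pi)-A_j(\pi') = \phi(x_{\pi_j})-\phi(x_{\pi_\ell})$ together with the denominator lower bound does produce the needed $1/(m-j+1)$; using $\sum_q W_q(\pi)\le 1$ before bounding the denominator ratio keeps the constant at $e^{2C}$ rather than $e^{4C}$). The net effect is that you prove the lemma with $C_0 = c\,e^{2C}\ln(m+1)$ for an absolute constant $c$ rather than the stated $2+e^{2C}\ln(m+1)$, because the boundary contributions in your scheme sit inside every $T_j$ and get swept into the harmonic sum; this is immaterial for Theorem \ref{thm:chaining}, which only needs $C_0 = O(\ln m)$ when $C$ is bounded. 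One small correction to your narrative: the additive $2$ in the paper's constant comes from the two exact boundary terms above, not from crude estimates at indices where $m-j+1$ is small --- the $1/(m-j+1)$ bound is valid throughout the window, so no such fallback is needed.
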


\begin{proof}
Suppose that $\ell < \ell'$, and define the following shorthand,
$$
\lambda_j = \phi(x_{\pi_j}), \quad \Lambda_j = \sum_{k=j}^m \lambda_k,
$$
and let $\lambda_j', \Lambda_j'$ be similarly defined with $x'$.
Then by replacing the permutation $\pi$ with $\pi'$ causes the $\Lambda_j$ to be replaced with $\Lambda_j - \lambda_j + \lambda_\ell$ for $j=\ell+1,\ldots,\ell'$.
Hence,
\begin{align*}
&\log Q_x(\pi) - \log Q_x(\pi') \\
& = \sum_{j=\ell+1}^{\ell'} \log\left( \Lambda_j -\lambda_j + \lambda_\ell \right) - \log\left( \Lambda_j \right) \\
& = \sum_{j=\ell+1}^{\ell'} \log\left( \Lambda_{j-1} + \lambda_\ell \right) - \log\left( \Lambda_j \right) \\
&= \sum_{j=\ell}^{\ell' - 1} \log\left(1 + \frac{\lambda_\ell}{\Lambda_j}\right) + \log \Lambda_\ell - \log \Lambda_{\ell'}.
\end{align*}
So we can bound the difference,
\begin{align*}
&|L_{x,x'}(\pi) - L_{x,x'}(\pi')| \le \left| \log \frac{\Lambda_\ell}{\Lambda_\ell'} - \log \frac{\Lambda_{\ell'}}{\Lambda_{\ell'}'} \right|\\
&+ \sum_{j=\ell}^{\ell'-1} \left|\log\left(1 + \frac{\lambda_\ell }{\Lambda_j}\right) - \log\left(1 + \frac{\lambda_\ell'}{\Lambda_j'}\right)\right|.
\end{align*}
Suppose that for each $j$, $|\log \lambda_j - \log \lambda_j'| \le \delta$ and that $|\log \lambda_j| \le C$.
Then we have that
\begin{align*}
    \left| \log \frac{\Lambda_\ell}{\Lambda_{\ell}'} \right| \le \max_{j \ge \ell}\left| \log \frac{\lambda_j}{\lambda_j'} \right| \le \delta.
\end{align*}
The same equation can be made for this term with $\ell'$.
Let 
$$
\alpha_j = \max \left\{ \frac{\lambda_\ell}{\Lambda_j}, \frac{\lambda_\ell'}{\Lambda_j'} \right\}.
$$
Then we have 
\begin{align*}
&\left| \log\left(1 + \frac{\lambda_\ell}{\Lambda_j}\right) - \log\left(1 + \frac{\lambda_\ell'}{\Lambda_j'}\right) \right| \\
&\le \left| \log (1 + \alpha_j ) - \log (1 + e^{-\delta} \alpha_j ) \right| \le \left| 1 - e^{-\delta} \right| |\alpha_j|.
\end{align*}
Furthermore, because $\Lambda_j \ge (m-j +1) e^{-C}$ then $|\alpha_j| \le (m-j+1)^{-1} e^{2C}$ and
\begin{align*}
&|L_{x,x'}(\pi) - L_{x,x'}(\pi')| \le 2 \delta \\
&+ \sum_{j=\ell}^{\ell'-1} |1 - e^{-\delta}| \frac{e^{2C}}{m - j + 1} \\
&\le 2 \delta + |1 - e^{-\delta}| e^{2C} H_m\\
&\le \delta (2 + e^{2C} \ln(m+1)).
\end{align*}
In the above equation, $H_m$ is the $m$th harmonic number.
A similar calculation can be made when $\ell' > \ell$.
Setting $\delta = \|\log \phi(x) - \log \phi(x') \|_\infty$ concludes the proof.
\end{proof}

\begin{proof}[Proof of Theorem \ref{thm:chaining}.]
Define the empirical process function to be
$$
\rho_n(x) := \frac 1n \sum_{i=1}^n \left( \log \frac{P_{X_i^\star}(\Pi_i)}{P_{X_i}(\Pi_i)} - \mathbb E \log \frac{P_{X_i^\star}(\Pi_i)}{P_{X_i}(\Pi_i)} \right).
$$
By the listwise representation theorem, $\rho_n(x)$ is a function of $n \times m$ independent exponential random variables.
Moreover, if we were to change the value of a single element $y_{ik}$ then this would result in a change of permutation of the type described in Lemma \ref{lem:bdd_diffs}.
Notice that the bound on the change in the relative loss is $C_0 \|\log \phi(X_i) - \log \phi(X_i') \|_\infty$, where $C_0 = 2 + e^{2C} \ln(m+1)$, and notice that the sum of squares of these bounds are,
\begin{align*}
&\sum_{i,k} C_0^2 \|\log \phi(X_i) - \log \phi(X'_i)\|^2_\infty \\
&= m C_0^2 \sum_{i=1}^n \| \log \phi(X_i) - \log \phi(X'_i) \|_\infty^2 \\
&= m C_0^2 \| Z - Z' \|_{\infty,2}^2,
\end{align*}
where $Z,Z'$ are $\log \phi$ applied elementwise to $X,X'$ respectively.
By Lemma \ref{lem:bdd_diffs} and McDiarmid's inequality,
$$
\mathbb P \{ n(\rho_n(x) - \rho_n(x')) > \epsilon \} \le \exp\left( - \frac{2 \epsilon^2}{m C_0^2 \| Z - Z'\|_{\infty,2}^2} \right).
$$
Hence, the stochastic process $\{ n \rho_n(X): X\in \mathcal X \}$ is a subGaussian field with canonical distance, 
$$
d(X,X') :=  \sqrt m C_0 \| Z - Z'\|_{\infty,2}.
$$
The result follows by Dudley's chaining \cite{talagrand2006generic}.
\end{proof}

\begin{lemma}
\label{lem:lipschitz}
If $\log \phi$ is 1-Lipschitz then we have that 
$g(\mathcal Z) \le g(\mathcal X)$.
\end{lemma}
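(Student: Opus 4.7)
The plan is to show that the elementwise map $X \mapsto \log \phi(X)$ is 1-Lipschitz from $(\mathcal X, \|\cdot\|_{\infty,2})$ to $(\mathcal Z, \|\cdot\|_{\infty,2})$, and then use the standard fact that covering numbers are monotone nonincreasing under 1-Lipschitz images. Integrating the resulting pointwise inequality for the covering number then yields the desired bound $g(\mathcal Z) \le g(\mathcal X)$.

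First, I would unpack the norm. For any $X, X' \in \mathcal X$ with images $Z = \log \phi(X), Z' = \log \phi(X') \in \mathcal Z$, the 1-Lipschitz assumption on $\log \phi$ gives $|Z_{ij} - Z'_{ij}| \le |X_{ij} - X'_{ij}|$ for every $(i,j)$. Taking the max over $j$ in each row shows $\|Z_i - Z'_i\|_\infty \le \|X_i - X'_i\|_\infty$, and then squaring and summing over $i$ yields
\begin{equation*}
\|Z - Z'\|_{\infty,2} \;=\; \sqrt{\sum_{i=1}^n \|Z_i - Z'_i\|_\infty^2} \;\le\; \sqrt{\sum_{i=1}^n \|X_i - X'_i\|_\infty^2} \;=\; \|X - X'\|_{\infty,2}.
\end{equation*}

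Next, I would argue the covering number inequality: given any $\epsilon$-net $\{X^{(1)}, \ldots, X^{(N)}\}$ of $\mathcal X$ in the $\|\cdot\|_{\infty,2}$ norm with $N = \mathcal N(\epsilon, \mathcal X, \|\cdot\|_{\infty,2})$, the set $\{\log\phi(X^{(k)})\}_{k=1}^N \subset \mathcal Z$ is an $\epsilon$-net of $\mathcal Z$ by the Lipschitz bound above. (For any $Z = \log\phi(X) \in \mathcal Z$, pick $X^{(k)}$ within $\epsilon$ of $X$; then $\log\phi(X^{(k)})$ is within $\epsilon$ of $Z$.) Hence $\mathcal N(\epsilon, \mathcal Z, \|\cdot\|_{\infty,2}) \le \mathcal N(\epsilon, \mathcal X, \|\cdot\|_{\infty,2})$ for every $\epsilon > 0$.

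Finally, taking logarithms, square roots, and integrating from $0$ to $\infty$ preserves the inequality, giving
\begin{equation*}
g(\mathcal Z) = \int_0^\infty \sqrt{\ln \mathcal N(u, \mathcal Z, \|\cdot\|_{\infty,2})} \, \mathrm{d}u \;\le\; \int_0^\infty \sqrt{\ln \mathcal N(u, \mathcal X, \|\cdot\|_{\infty,2})} \, \mathrm{d}u = g(\mathcal X),
\end{equation*}
as claimed. There is no real obstacle here; the only subtlety worth stating explicitly is the direction of the covering-number inequality (a 1-Lipschitz image has \emph{smaller} covering number, not larger), but this is immediate from the construction of the net.
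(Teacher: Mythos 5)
Your proof is correct and follows essentially the same route as the paper's: elementwise Lipschitz bound, hence $\|Z - Z'\|_{\infty,2} \le \|X - X'\|_{\infty,2}$, hence the covering-number comparison, hence the entropy-integral comparison. The only difference is that you spell out the net-pushforward argument and the final integration step, which the paper leaves implicit.
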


\begin{proof}
Let $X,X' \in \mathcal X$, and $Z = \log \phi(X), Z' = \log \phi(X')$.
Then
$$
|z_{ij} - z_{ij}'| \le |x_{ij} - x_{ij}'|,
$$
by the Lipschitz property.
Hence, $\| Z - Z' \|_{\infty,2} \le \|X - X'\|_{\infty, 2}$, and so
$$
\mathcal N(u,\mathcal Z, \| . \|_{\infty,2}) \le \mathcal N(u,\mathcal X, \|.\|_{\infty,2}).
$$
\end{proof}

\begin{proof}[Proof of Corollary \ref{cor:simple}.]
Consider two matrices $X,X'$ in the model \eqref{eq:simplepara} then
$$
\| x_i - x_i' \|_\infty = \max_j |\beta^\top z_{ij} - \beta'^\top z_{ij}| \le \|\beta - \beta'\|_2 \|Z_i\|_{2,\infty }.
$$
Let $\zeta = \max_i \|Z_i\|_{2,\infty }$ then 
$$
\| X -X' \|_{\infty,2} \le \zeta \| \beta - \beta'\|_2.
$$
The covering number of $\mathcal X$ is therefore bounded by
$$
\mathcal N(\mathcal X, u, \| . \|_{\infty,2}) \le \mathcal N(B_{c_b},u/ \zeta,\|.\|_2) \le \left(\frac{\zeta C_0 c_b}{u}\right)^s,
$$
for an absolute constant $C_0$, where $B_{c_b}$ is the $\ell_2$ ball of radius $c_b$.
The result follows by Lemma \ref{lem:lipschitz} and Theorem \ref{thm:chaining}.
\end{proof}

\begin{proof}[Proof of Corollary \ref{cor:person}.]
Let $X = U V^\top$ and $X' = U' V'^\top$ such that $U,V,U',V'$ bounded by 1 in Frobenius norm (modifying the Frobenius norm bound does not change the substance of the proof).
Consider
\begin{align*}
&|u_i^\top v_j - u_i'^\top v_j'| \le |u_i^\top v_j - u_i'^\top v_j| + |u_i'^\top v_j - u_i'^\top v_j'| \\
&\le \| u_i - u_i' \| \| v_j \| + \| u_i' \| \| v_j - v_j' \|.
\end{align*}
Maximizing this over the selection of $j$,
\begin{align*}
&\max_j |u_i^\top v_j - u_i'^\top v_j'| \\
&\le \| u_i - u_i' \|_2 \| V \|_{2,\infty} + \| u_i \|_2 \| V - V' \|_{2,\infty}.
\end{align*}
Hence,
\begin{align*}
&\| X - X' \|_{\infty, 2} \\
&\le \| U - U' \|_F \| V \|_{2,\infty} + \| U' \|_F \| V - V' \|_{2,\infty} \\
&\le \| U - U' \|_F + \| V - V' \|_{2,\infty}.
\end{align*}
Consider the vectorization mapping from the $m \times r$ matrix to the $mr$ dimensional vectors.  The Frobenius norm is mapped to the $\ell_2$ norm, and we can consider the $2,\infty$ norm to be, the norm $\| x \|_\rho = \max_{j} \| (x_{jr+1},\ldots,x_{j(r+1)}) \|_2$.
The $\rho$-norm unit ball ($B_\rho$) is just the Cartesian product of the $\ell_2$ norm ball in $K$ dimensions.
The volume of a $d$-dimensional ball, $V_{d}$, is bounded by
$$
C_l \le \frac{V_d}{(e \pi)^{d/2}} d^{\frac d2} \le C_u,
$$
where $C_l < C_u$ are universal constants.
So the volume ratio between the $\ell_2$ norm ball and the $\rho$ norm ball is bounded by
\begin{align*}
&\frac{V(B_2)}{V(B_\rho)} \le C \left( \frac{{r}^{r / 2}}{(e \pi)^{r/2}} \right)^{m} \big/ \left( \frac{{(rm)}^{rm / 2}}{(e \pi)^{rm/2}} \right)\\
&\le C m^{- rm/2},
\end{align*}
where $C = C_u / C_l$.
\begin{align*}
&\mathcal N(\epsilon,B_2,\| . \|_\rho) \le C_r \left( \frac 2\epsilon + 1 \right)^{rm} m^{- rm/2} \\
&\le C \left( \frac{3}{\epsilon \sqrt m} \right)^{rm},
\end{align*}
for $\epsilon \le 1$.  This is also the covering number of the Frobenius norm ball in the $2,\infty$ norm.
Moreover, we know that the covering number of the unit Frobenius norm ball in $n \times K$ matrices ($B_F$) in the Frobenius norm is 
$$
\mathcal N(\epsilon,B_F,\| . \|_F) \le \left( \frac{c}{\epsilon} \right)^{nr},
$$
for some constant $c$.
Consider covering the space $\mathcal X$, by selecting centers $U,V$ from the $\epsilon/2$-coverings of $B_F$ in the $F$-norm and ${2,\infty}$ norm respectively.
By the above norm bound, this produces an $\epsilon$-covering in the $\infty,2$ norm.  
Dudley's entropy bound is thus
\begin{align*}
&\int_0^\infty \sqrt{\log \mathcal N(\epsilon, B_F, \|.\|_F) + \log \mathcal N (\epsilon, B_F, \|. \|_{2,\infty})} {\rm d} \epsilon \\
&\le \int_0^c \sqrt{nr \log (c / \epsilon)} {\rm d}\epsilon + \int_0^{3/\sqrt m} \sqrt{- mr \log (\epsilon \sqrt m / 3)} {\rm d}\epsilon.
\end{align*}
So that
\begin{align*}
&\int_0^c \sqrt{nr \log (c / \epsilon)} {\rm d}\epsilon \\
&\le c' \sqrt{nr}
\end{align*}
for some absolute constant $c'$ and
\begin{align*}
&\int_0^{3/\sqrt m} \sqrt{- mr \log (\epsilon \sqrt m / 3)} {\rm d}\epsilon \le \int_0^3 \sqrt{r \log (u/3)} {\rm d} u \\
&\le c' \sqrt r.
\end{align*}
Hence, for $g(\mathcal X) \le c' \sqrt{nr}$ and we have the result.
\end{proof}

\subsection{Algorithms}
\begin{algorithm}[tb]
  \caption{Compute gradient for $V$ when $U$ fixed}
  \label{alg:gradv}
\begin{algorithmic}
  \STATE {\bfseries Input:} $\Pi$, $U$, $V$, $\lambda, \rho$
  \STATE {\bfseries Output:} $g$ \COMMENT{$g\in \R^{r \times m}$ is the gradient for $f(V)$}
    \STATE $g = \lambda \cdot V$
  \FOR{$i = 1$ {\bfseries to} $n$}
     \STATE Precompute $h_t = u_i^T v_{\Pi_{it}}$ for $1 \leq t \leq \bar{m}$ \COMMENT{For implicit feedback, it should be $(1 + \rho) \cdot \tilde{m}$ instead of $\tilde{m}$, since $\rho \cdot \tilde{m}$ $0$'s are appended to the back}
     \STATE Initialize $total = 0$, $tt = 0$ 
     \FOR{$t = \bar{m}$ {\bfseries to} $1$}
        \STATE $total \mathrel{+}= \exp(h_t)$
        \STATE $tt \mathrel{+}= 1/total$
     \ENDFOR
     \STATE Initialize $c[t] = 0$ for $1 \leq t \leq \bar{m}$ 
     \FOR{$t = \bar{m}$ {\bfseries to} $1$}
        \STATE $c[t] \mathrel{+}= h_t \cdot (1 - h_t) $
        \STATE $c[t] \mathrel{+}= \exp(h_t) \cdot h_t \cdot (1 - h_t) \cdot tt$
        \STATE $total \mathrel{+}= \exp(h_t)$
        \STATE $tt \mathrel{-}= 1 / total$
     \ENDFOR
     \FOR{$t = 1$ {\bfseries to} $\bar{m}$}
        \STATE $g[:, \Pi_{it}] \mathrel{+}= c[t] \cdot u_i$
    \ENDFOR
  \ENDFOR 
  \STATE {\bfseries Return} $g$
\end{algorithmic}
\end{algorithm}

\begin{algorithm}[tb]
  \caption{Gradient update for $V$ (Same procedure for updating $U$)}
  \label{alg:update}
\begin{algorithmic}
    \STATE {\bfseries Input:} $V, ss$, $rate$ \COMMENT{$rate$ refers to the decaying rate of the step size $ss$}
    \STATE {\bfseries Output:} $V$
    \STATE Compute gradient $g$ for $V$ \COMMENT{see alg~\ref{alg:gradv}}
    \STATE $V \mathrel{-}= ss \cdot g$
    \STATE $ss \mathrel{*}= rate$
    \STATE {\bfseries Return} $V$
\end{algorithmic}
\end{algorithm}

\begin{figure}[ht]
\vskip 0.2in
\begin{center}
\centerline{\includegraphics[width=\columnwidth]{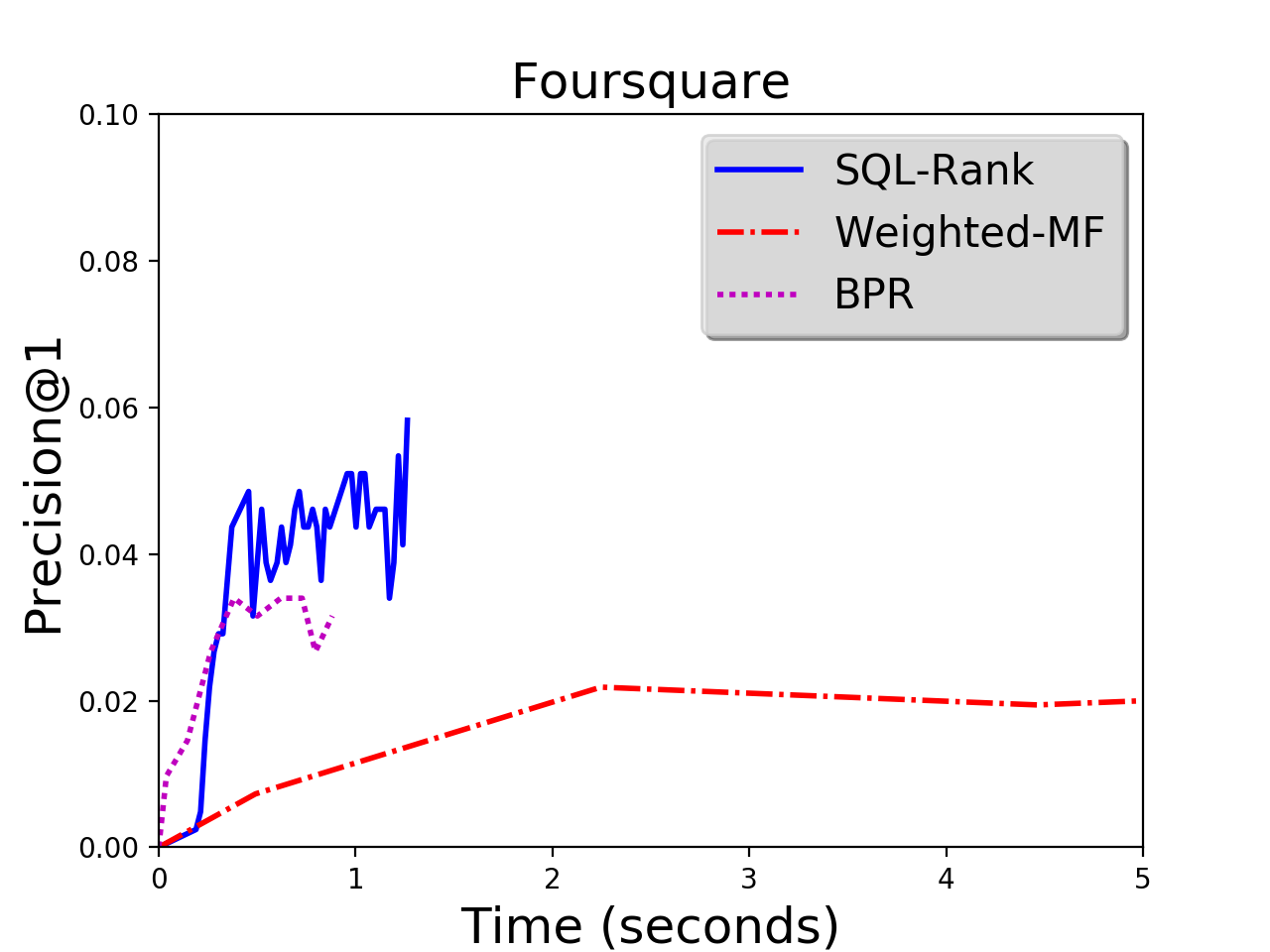}}
\caption{Comparing implicit feedback methods.}
\label{time2}
\end{center}
\vskip -0.2in
\end{figure}

\begin{figure}[ht]
\vskip 0.2in
\begin{center}
\centerline{\includegraphics[width=\columnwidth]{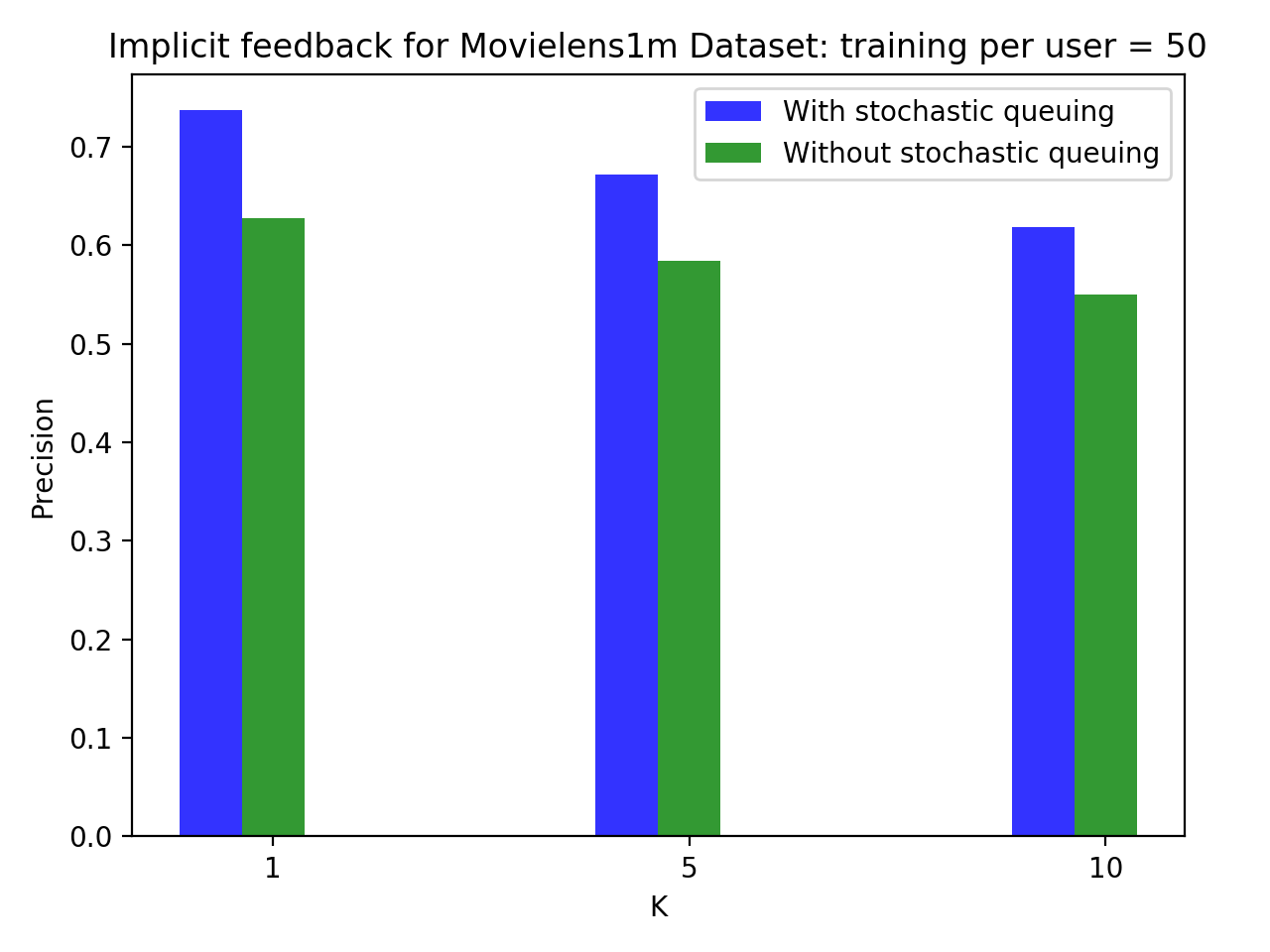}}
\caption{Effectiveness of Stochastic Queuing Process.}
\label{sqp}
\end{center}
\vskip -0.2in
\end{figure}

\begin{figure}[ht]
\vskip 0.2in
\begin{center}
\centerline{\includegraphics[width=\columnwidth]{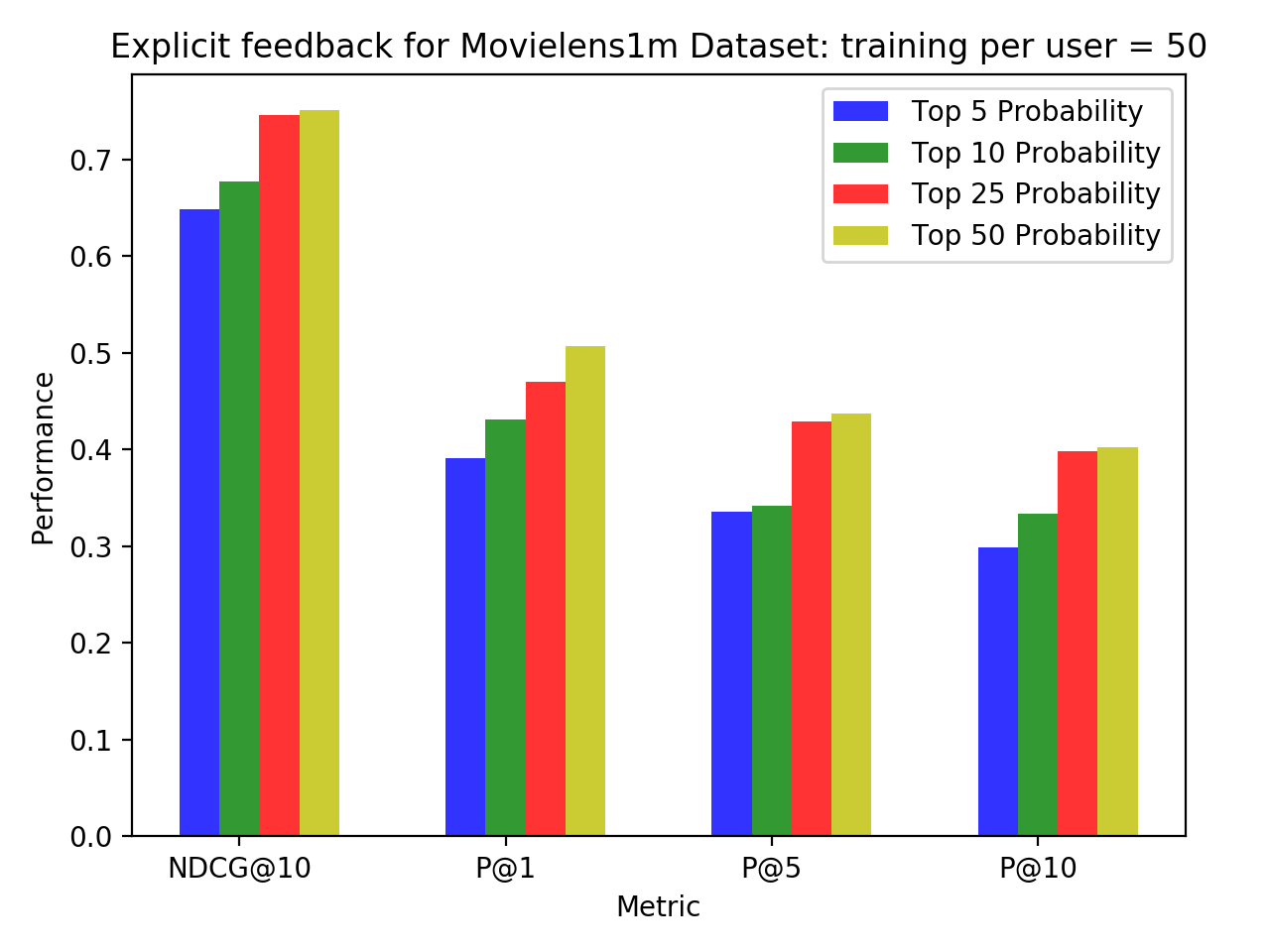}}
\caption{Effectiveness of using full lists.}
\label{explicit_topk}
\end{center}
\vskip -0.2in
\end{figure}

\end{document}